\documentclass[11pt]{article}

\usepackage[letterpaper, left=1in, right=1in, top=1in,bottom=1in]{geometry}

\usepackage{parskip}

\usepackage{booktabs} 

\usepackage[utf8]{inputenc}

\usepackage{geometry}

\usepackage{graphpap,amscd,mathrsfs,graphicx,lscape,enumitem,dsfont,bm,url,subfigure}
\usepackage{epsfig,amstext,xspace}
\usepackage{algorithm,comment}
\usepackage{algorithmic}

\usepackage{auxiliary}
\usepackage{thmtools,thm-restate}

\usepackage{algorithm,algorithmic}
\usepackage[utf8]{inputenc} 
\usepackage[T1]{fontenc}    
\usepackage{hyperref}       
\usepackage{url}            
\usepackage{booktabs}       
\usepackage{amsfonts}       
\usepackage{nicefrac}       
\usepackage{microtype}      
\usepackage{graphicx} 
\usepackage{pgfplots}
\usepackage{tikz}
\usetikzlibrary{patterns}
\usepackage[font=footnotesize,labelfont=bf]{caption}

\usepackage{color}              
\usepackage[suppress]{color-edits}
\addauthor{tl}{cyan}
\addauthor{et}{blue}
\addauthor{dw}{brown}

\begin{document}
\title{Feedback graph regret bounds for Thompson Sampling and UCB}

 \author{
 Thodoris Lykouris\thanks{Microsoft Research NYC, \texttt{thlykour@microsoft.com}. Work mostly conducted while author was a Ph.D. student at Cornell University and supported by a Google Ph.D. Fellowship, and NSF grants CCF-1563714, and CCF-1408673.}
 \and  \'{E}va Tardos\thanks{Cornell University, \texttt{eva.tardos@cornell.edu}. Work supported in part by NSF grants CCF-1563714, CCF-1408673, and AFOSR grant F5684A1.}
 \and Drishti Wali \thanks{Cornell University, \texttt{drishtiwali@cs.cornell.edu}. Work supported in part by NSF grant CRII-1849899.}
 }
\date{}
\maketitle
\begin{abstract}
We study the stochastic multi-armed bandit problem with the graph-based feedback structure introduced by Mannor and Shamir \cite{MannorS11}. We analyze the performance of the two most prominent stochastic bandit algorithms, Thompson Sampling and Upper Confidence Bound (UCB), in the graph-based feedback setting. We show that these algorithms achieve  regret guarantees that combine the graph structure and the gaps between the means of the arm distributions. Surprisingly this holds despite the fact that these algorithms do not explicitly use the graph structure to select arms; they observe the additional feedback but do not explore based on it. Towards this result we introduce a \emph{layering technique} highlighting the commonalities in the two algorithms.
\end{abstract}


\section{Introduction}
\label{sec:intro}
Online learning is a classical model for sequential decision-making under uncertainty. At each time step the learner faces a choice between a set $\mathcal{V}$ of $k$ options usually referred to as arms. We consider the stochastic version of the  problem where there is a probability distribution $\mathcal{F}$ (fixed over time) of rewards over arms; we refer to the marginal distribution of arm $a$ as $\mathcal{F}(a)$. If the distribution $\mathcal{F}$ was known the decision-maker would always select the arm $a^\star$ with highest expected reward $\mu(a^\star)$. The goal of the learner is to make sequential choices while earning  rewards close to the rewards of arm $a^\star$. 

This trade-off between earning good rewards at the present (exploitation) and learning new information about the future (exploration) crucially relies on the information the learner receives as feedback. In the classical bandit model of online learning, the learner observes only the reward associated with her chosen action. This results in regret guarantees that scale with the number of arms. However in most applications of online learning the reward or loss of one arm reveals information about other arms which can significantly facilitate the learning process. A natural model capturing this extra information is the graph-based feedback setting of Mannor and Shamir~\cite{MannorS11} where the feedback is specified by a graph $G$ with the arms as its nodes. When an action $a$ is selected, the rewards of all arms adjacent to $a$ are revealed to the learner. In this setting, online learning techniques provide guarantees that scale with graph parameters for example, the independence number of graph $G$.

Classical stochastic bandit algorithms achieve enhanced performance guarantees when the difference between the mean of $a^\star$ and the means of other arms $a\in\mathcal{V}$ is large as then $a^\star$ is more easily identifiable as the best arm. This difference $\Delta(a)=\mu(a^\star)-\mu(a)$ is typically known as the \emph{gap} of arm $a$ and the performance guarantees scale inversely with it. There are two prominent practical stochastic bandit paradigms to derive these guarantees. The first is based on the idea of \emph{optimism in
the face of uncertainty} \cite{Lai+Robbins:1985,AuerCF2002,AudibertBubeck09,GarivierCappe11,Bubeck2013BanditsWH} which creates confidence intervals for the means of all arms and treats them as an optimistic estimate of their anticipated reward. Most of these algorithms are based on Upper Confidence Bound (UCB) algorithm of Auer et al.~\cite{AuerCF2002} which was also the first finite-time stochastic bandit algorithm. The second and more modern paradigm is based on randomized versions of these confidence intervals \cite{pmlr-v23-agrawal12,KauffmanMunosKorda12,RussoRoy14,Russo-Roy}. Thompson Sampling \cite{thompson_biom33} lies at the heart of most of this paradigm and has been proven useful in more complicated reinforcement learning settings \cite{AgrawalJia17}. However 
the only enhanced gap-based guarantees we have for these two important paradigms are for the pure bandit setting which does not incorporate richer notions of feedback such as the graph-based feedback\footnote{For other algorithms offering gap-based guarantees that incorporate the graph structure see related work.}. This poses the natural question:
\begin{quote}
    \emph{Can algorithms such as UCB and Thompson Sampling benefit from extra feedback?}
\end{quote}

\subsection{Our contribution}
We show that surprisingly these classical algorithms seamlessly combine the graph structure with the gaps of the arms to provide graph-based performance guarantees without any particular modifications. This is achieved despite the fact that they do not select arms specifically aiming to learn about the rewards of many other arms; they just incorporate the extra information that they happen to acquire via their selected neighbors. Our main result is to bound the regret of these algorithms in terms of $\sum_{a\in I} \frac{1}{\Delta(a)}$, where $I$ is an independent set of the graph $G$ and $\Delta(a)$ is the gap of arm $a$.

We assume that the feedback graph is fixed through time. The rewards of different time steps are independent but the rewards of different arms in any single time step may be correlated. Allowing such correlation makes the model more general since observations across possible actions are often strongly correlated: observations at nearby physical locations are likely similar, patients with similar profile may react to treatments in a similar way, effect of advertising is likely to be similar on similar observers, etc. We note that in many of these applications the feedback structure depends on physical structure of the alternatives and hence is not changing over time.  While revealing the reward about neighboring arms does not exactly model the information available to the learner in the above applications, the graph based feedback model is a simple and elegant abstraction of partial feedback and hence offers great opportunity to understand the effect of feedback structure on learning.

\paragraph{Our results.}
As a warm-up in Section~\ref{sec:active_arm_elimination} we show a regret guarantee of $\bigO\prn*{\max_{I\in\mathcal{I}}\sum_{a\in I}\frac{\log^2 T}{\Delta(a)}}$ where $\mathcal{I}$ is the set of all independent sets (Theorem~\ref{thm:active_fixed_graph_new}), for a graph-based variant of Active Arm Elimination \cite{EvenDarManMan06} similar to the one studied by Cohen et al.~\cite{DBLP:conf/icml/CohenHK16}. Although this result is weaker by a logarithm from the optimal bounds \cite{Buccapatnam-a} (see Section~\ref{ssec:related_work} for elaborate comparison to related work), its analysis serves as an important building block that allows us to extend the guarantees to UCB and Thompson Sampling. Our main results are then presented in Sections~\ref{sec:ucb} and \ref{sec:thompson_sampling} where we show how the aforementioned regret guarantees can be extended to UCB (Theorem~\ref{thm:ucb_main})  and Thompson Sampling (Theorem~\ref{thm:TS}) respectively.

\paragraph{Our techniques.} 
The warm-up algorithm in Section~\ref{sec:active_arm_elimination} selects arms that lie in a maximal independent set $I$ in a round-robin fashion. In one \emph{round} of this round-robin process we observe all the arms since at least one of their neighbors is in $I$ due to its maximality. This gives a gap-based upper bound on the number of times each suboptimal arm will be selected. For UCB and Thompson Sampling, we create a layering argument (Lemma~\ref{lem:layering}) that resembles these rounds. Unlike the rounds of Active Arm Elimination, the list of events in each \emph{layer} are not contiguous in time. When an arm $a^t$ is selected at time $t$, we place it in the lowest layer where it has not yet been observed, and place all its neighbors in the same layer (Figure~\ref{fig:layering}). The layers created this way have a few key properties that allow us to adapt the warm-up analysis of Active Arm Elimination
to this case:
\begin{itemize}
\item The arms put in a layer by being selected in the algorithm form an independent set.
\item At the time a selected arm is put in layer $\ell$, it has been observed at least $\ell-1$ times.
\end{itemize}
Thus, we can think of 
the layers as corresponding to rounds of the active arm elimination, and this enables us to extend the analysis to these algorithms.

\subsection{Related Work}\label{ssec:related_work}
The feedback graph structure for online learning was introduced in the adversarial setting \cite{MannorS11}. In this setting Alon et al. \cite{Mannor-Nonstochastic-MAB} show regret bounds of at most $O(\sqrt{T \beta\log k})$, where $\beta$ is the independence number of the graph. Subsequent work has focused on providing improved data-dependent guarantees \cite{Kock2014EfficientLB, LykourisSrTa18}, robustness to noise \cite{KocakNV16}, and understanding the effect of different observability structures \cite{AlonCBDK15,DBLP:conf/icml/CohenHK16}.

Stochastic multi-armed bandits as a model of online learning has a long history dating back to the seminal works of Lai and Robbins \cite{Robbins:1952, Lai+Robbins:1985}; in the finite-horizon setting, the first algorithm suggested was the Upper Confidence Bound (UCB) algorithm by Auer et al. \cite{AuerCF2002}. In the context of feedback graphs, stochastic bandits were first considered by Caron et al. \cite{UCB-clique} who provided the natural generalization of UCB, which they termed UCB-N where the neighbors of selected arms also make updates. The regret guarantee they obtain is of the form $\sum_{c\in C} \frac{(\max_{a\in c}\Delta(a))\cdot \log T}{(\min_{a\in c} \Delta(a))^2}$ where $C$ is the minimum-size partition of arms across cliques (clique cover). We improve upon this guarantee in multiple fronts. First, even though we lose an extra logarithm compared to this result, the maximum gap in any clique can be $1$ at every round, therefore our result has an improved dependence on the gaps (inverse linear instead of inverse quadratic). This in particular implies that our worst-case dependence on the time-horizon (ignoring logarithms) is $\sqrt{T}$ instead of $T^{2/3}$. Maybe even more importantly, our result sums over nodes in an independent set instead of a clique cover (the number of disjoint cliques needed to cover the graph). These quantities can be really far apart which gives an additional big improvement on gap-based bounds for UCB-N.

The first works going beyond clique partition as a parameter of the graph structure in the context of stochastic multi-arm bandits with feedback graphs are due to Buccapatnam et al. \cite{Buccapatnam:2014,Buccapatnam-a} and then Cohen et al. \cite{DBLP:conf/icml/CohenHK16}, both using variants of the Active Arm Elimination algorithm of Even-Dar et al. \cite{EvenDarManMan06}. Buccapatnam et al. \cite{Buccapatnam:2014,Buccapatnam-a} combine a version of eliminating arms 
suggested by Auer and Otner \cite{Auer2010} with linear programming to incorporate the graph structure in an algorithm they term UCB-LP\footnote{Despite the name, this algorithm is based on eliminating arms and does not select the arm with the higher upper confidence bound as the algorithm suggested by \cite{UCB-clique} which we study in Section~\ref{sec:ucb}.} which provides a regret guarantee of $\sum_{a\in D}\frac{\log T}{\Delta(a)}+k^2$ where $D$ is a particularly selected dominating set. Their algorithm uses the outcome of the linear program to explicitly guide exploration which is crucial in order to obtain a guarantee that depends on the minimum dominating set.\footnote{In Section~\ref{sec:conclusions}, we show that one cannot hope to obtain the same guarantee for algorithms such as UCB-N and TS-N that do not explicitly use the feedback graph to guide the exploration.} In contrast, our main contribution is to shed light on the ability of classical algorithms to seamlessly incorporate feedback without explicitly seeking to do so; in fact, we provide a unifying analysis for gap-based guarantees for algorithms such as UCB-N and TS-N that are more practical (for instance, they do not require knowledege of the time horizon, unlike techniques based on eliminating arms). Comparing the bounds, our approach depends on the possibly larger independence number (which is unavoidable for UCB-N and TS-N), loses an extra log factor, but is independent of $k$. Another work that utilizes the idea of eliminating arms for feedback graphs is the one by Cohen et al. \cite{DBLP:conf/icml/CohenHK16} who show a regret guarantee of $\sum_{a\in S} \frac{\log T}{\Delta(a)}$ for unknown and evolving graphs where $S$ is the set of the $\beta\log k$ arms with the smalles gap and $\beta$ is again the size of the maximum independent set. For the case of fixed graphs (e.g. capturing geographic proximity), we refine the above result to depend inversely on the gaps of a maximum independent set instead of the $\beta\log k$ smaller gaps. More importantly, our layering technique shows how such a result can be extended to more practical algorithms such as UCB-N and TS-N.

Thompson Sampling was initially suggested by Thompson \cite{thompson_biom33}; it was analyzed in the Bayesian setting (where we have priors for all arms) by Russo and Van Roy \cite{Russo-Roy} and in the frequentist setting (prior-free Bayesian setting) by Agrawal and Goyal \cite{pmlr-v23-agrawal12, Thompson-Sampling-Original,Shipra-2017}. In the context of undirected feedback graphs Tossou et al. \cite{Thompson-Sampling} and Liu et al. \cite{Info-sampling} extend the Bayesian guarantees incorporating the clique-cover size of the graphs in the natural graph extension of Thompson Sampling which they term TS-N. Recently Liu et al. \cite{Liu2018AnalysisOT} replace the latter with the independence number. The latter works also provide empirical comparisons of various stochastic bandit algorithms on different graphs and show the superiority of Thompson Sampling on the estimated graphs. However the regret bounds for all of \cite{Thompson-Sampling,Info-sampling,Liu2018AnalysisOT} incur a $\sqrt{T}$ dependence on the time horizon $T$. In contrast, we provide the first gap-dependent bounds for Thompson Sampling that go beyond the classical bandit setting and utilize the graph structure, while working on the more complicated frequentist setting. We note that a concurrent and independent work of Hu et al. \cite{HuMehtaPan19} also provides gap-dependent bounds for TS-N; their results are still weaker than ours since they scale with the clique cover rather than the independent set.

\section{Model}
\label{sec:model}
\paragraph{Multi-armed bandit with graph-based feedback.}
Our setting consists of a set $\mathcal{V}$ of $k$ arms and a probability distribution $\mathcal{F}$ of the rewards of the arms (where rewards of different arms may be correlated). Let $\mathcal{F}(a)$ be the marginal distribution of $\mathcal{F}$ for each arm $a\in\mathcal{V}$; we assume that this distribution has support only on $[0,1]$ and we denote its mean by $\mu(a)$. Crucially, the means of the different arms are unknown to the learner and the learner does not have prior distributional information about these means.

Whenever arm $a'$ is selected we sample an independent reward vector $r$ from the distribution $\mathcal{F}$, and earn reward $r(a')$. Let $a^\star$ denote the arm with the highest mean, and for each arm $a\in\mathcal{V}$ let $\Delta(a)=\mu(a^\star)-\mu(a)$ be the gap in expected rewards between the optimal arm $a^\star$ and the arm $a\in\mathcal{V}$.

The information feedback structure is defined by an undirected graph $G$  on the set of nodes $\mathcal{V}$. When the learner selects an arm $a'$, she receives reward $r(a')$, and also observes the rewards $r(a)$ for the set of arms $a\in\mathcal{N}(a')$, where $\mathcal{N}(a')$ denotes the set of nodes adjacent to $a'$ in the graph $G$. We use $\mathcal{I}(G)$ to denote the set of independent sets of $G$ and assume that the graph $G$ is fixed across time steps.

More formally, the protocol is as follows: We are given a set of arms $\mathcal{V}$, an undirected graph $G$ on these arms, and a time horizon $T$. The adversary selects the reward distribution $\mathcal{F}$ with rewards $r(a)\in [0,1]$ for all arms $a\in \mathcal{V}$. For each round $t=1,2,...,T$:
\begin{enumerate}
    \item  The learner selects an arm $a^t$ (possibly using a randomized algorithm).
    \item Stochastic rewards are drawn for all arms $a\in\mathcal{V}$: $r^t\sim\mathcal{F}$ (where rewards of different arms may be correlated).
    \item The learner earns reward $r^t(a^t)$, and observes the reward $r^t(a^t)$, as well as the rewards $r^t(a)$ for all arms $a\in \mathcal{N}(a^t)$, adjacent to $a^t$ in the graph $G$.
\end{enumerate}

\paragraph{Regret.}
The goal of the learner is to maximize the expected reward earned over time. If the distribution $\mathcal{F}$ was known, the learner would select $a^\star$ in every round, so we measure the performance of the learner by the expected regret, comparing its reward to the reward of the best arm
$$\bm{R_T}=\mathbb{E}
\brk*{\sum_{t}r^t(a^\star)
-r^t(a^t)}
,$$
where expectation is taken over the randomness of the rewards of the arms as well as the choices of the algorithm. For ease of presentation, we express the regret in terms of the gaps of the arms as
$$
\bm{R_T}=\sum_{t}\mathbb{E}
\brk*{\Delta(a^t)},
$$
where the expectation is now only over the choices of the algorithm.

\section{Warm-up: Active Arm Elimination via the layering technique}
\label{sec:active_arm_elimination}
In this section, we show how to adapt the Active Arm Elimination algorithm of Even-Dar et al.~\cite{EvenDarManMan06} using the graph structure to obtain regret bounds that only depend on the gaps of the nodes lying on an independent set.
The purpose of this section is to introduce our main technique, \emph{layering}, which serves as a building block for deriving the same guarantee for UCB (Section~\ref{sec:ucb}) and Thompson Sampling (Section~\ref{sec:thompson_sampling}) 
that do not explicitly use the graph structure. 

The Active Arm Elimination algorithm maintains the empirical mean $\tilde{\mu}^t(a)$ for each arm $a\in\mathcal{V}$ at each time step $t$ along with a confidence interval ensuring that the actual mean $\mu(a)$ falls within this interval with high probability at all times. An arm is eliminated if its confidence interval is fully below the interval of some other arm. The original Active Arm Elimination algorithm plays all not yet eliminated arms in a round robin fashion.

We adapt Active Arm Elimination by proceeding in rounds (the algorithm is formally described in Algorithm~\ref{alg:AAE}). In each round, we choose a maximal independent set of the not-yet eliminated arms\footnote{Maximal corresponds to an independent set that cannot be extended; such a set can be computed by adding nodes greedily. Note that an independent set in any subgraph is also independent in the original graph.} and we play once each node in this independent set, instead of all the non-eliminated arms as the original algorithm. By playing a maximal independent set in a round, we observe at least one sample for the reward of each arm, and hence improve the estimates of all arms. We note that any maximal independent set works well, so selecting an independent set greedily is fine.

We denote the set of active arms (that is, the set of non-eliminated arms)
$\mathcal{A}$ and use  $N_a^t$ to denote the number of times an arm $a$ has been observed until time step $t$. The empirical mean of an arm $a$ at the end of round $t$ is 
$$\tilde{\mu}^t(a)=\frac{1}{N_a^t}\sum_{\substack{s\le t: a^{s}=a \\ \textrm{ or }  a\in\mathcal{N}(a^s)}}
r^s(a)$$

As a confidence interval we use the interval centered around  $\tilde{\mu}^t(a)$ extended by $\sqrt{\ln(2Tk/\delta)/(2N_a^t)}$ in both directions. Using classical concentration bounds and the union bound we get that with high probability the mean of each arm falls within this interval (Lemma~\ref{lem:concentration_new}); for completeness we provide its proof in Appendix \ref{app:proof_concentration} of the supplementary material.

\begin{algorithm}[]
\caption{Active Arm Elimination using  independent set}
\label{alg:AAE}
\begin{algorithmic}
 \STATE Initialize the set of active arms as $\mathcal{A}=\mathcal{V}$, time as $t=1$, and rounds as $\gamma=0$.
 \WHILE{$t \leq T$} 
 \STATE Move to the next round: $\gamma\leftarrow \gamma+1$
 \STATE Select a maximal independent set $I_{\gamma}$ of the subgraph of set  $\mathcal{A}$
  \FOR{all $a\in I_{\gamma}$}
  \STATE Select arm $a^t=a$ and earn reward $r^t(a^t)$
 \STATE Observe the samples from all arms in $\mathcal{N}(a^t)$
  \STATE Move to the next time step: $t\leftarrow t+1$
 \ENDFOR
  \STATE Delete from the set of active arms $\mathcal{A}$ all arms $a'$ whose confidence interval is below the confidence interval of some other arm $a\in \mathcal{A}$:
  $$
  \tilde{\mu}^t(a') +\sqrt{\frac{\ln(2Tk/\delta)}{2N_{a'}^t}}< \max_{a\in \mathcal{\mathcal{A}}} \prn*{\tilde{\mu}^t(a)-\sqrt{\frac{\ln(2Tk/\delta)}{2N_{a}^t}}}
  $$
  \ENDWHILE
  \end{algorithmic}
\end{algorithm}
\vspace{0.1IN}
\begin{lemma}
\label{lem:concentration_new} For any arm $a$ and any time $t$, with probability at least $1-\frac{\delta}{kT}$ it holds that
$$|\tilde{\mu}^t(a)
-\mu(a)|\leq\sqrt{\frac{\ln(\frac{2Tk}{\delta})}{2N^t_a}}.$$  The probability this is true for all arms throughout the algorithm is at least $1-\delta$.
\end{lemma}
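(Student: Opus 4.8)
The plan is to derive this as a consequence of Hoeffding's inequality together with a union bound, the one genuine subtlety being that $N^t_a$ is itself random, determined by the algorithm's adaptive choices. First I would set up the right probabilistic scaffolding: for each arm $a$, fix an i.i.d. sequence $X_{a,1}, X_{a,2}, \dots \sim \mathcal{F}(a)$ supported on $[0,1]$, and couple it to the run of the algorithm by declaring $X_{a,j}$ to be the reward revealed the $j$-th time arm $a$ is observed (whether by being selected or by being a neighbor of the selected arm). This coupling is legitimate because in the protocol the arm $a^s$ is committed \emph{before} the reward vector $r^s$ is drawn, so $r^s(a)$ is independent of whether $a$ happens to be observed at time $s$; the observed rewards of arm $a$, read in arrival order, are therefore genuinely i.i.d.\ from $\mathcal{F}(a)$, and $\tilde{\mu}^t(a)$ equals the empirical mean $\hat{\mu}_{a,n} := \frac{1}{n}\sum_{j=1}^{n} X_{a,j}$ evaluated at $n = N^t_a$.

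With this in hand, the core estimate is a single application of Hoeffding's inequality to a \emph{fixed} number of samples. For any fixed arm $a$ and fixed count $n \ge 1$, since the $X_{a,j}$ lie in $[0,1]$,
$$
\Pr\!\left[\,\left|\hat{\mu}_{a,n} - \mu(a)\right| > \sqrt{\frac{\ln(2Tk/\delta)}{2n}}\,\right] \le 2\exp\!\left(-2n\cdot \frac{\ln(2Tk/\delta)}{2n}\right) = \frac{\delta}{Tk},
$$
where the confidence radius $\sqrt{\ln(2Tk/\delta)/(2n)}$ is chosen precisely so that the exponent cancels the factor $2n$ and the bound collapses to $\delta/(Tk)$ independently of $n$. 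This is exactly the per-arm, per-count statement asserted in the first sentence of the lemma, reading ``time $t$'' as fixing the single count $n = N^t_a$ relevant at that time.

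To pass from a fixed count to the random $N^t_a$ and then to ``all arms throughout the algorithm,'' I would union bound. The key observation is that $N^t_a$ always takes a value in $\{1,\dots,T\}$ (an arm is observed at most once per time step and there are at most $T$ steps), so the event that $\tilde{\mu}^t(a)$ ever violates its confidence bound is contained in the event that $\hat{\mu}_{a,n}$ violates the radius $\sqrt{\ln(2Tk/\delta)/(2n)}$ for \emph{some} count $n \in \{1,\dots,T\}$. Summing the per-count failure probability over the at most $T$ values of $n$ and over the $k$ arms yields total failure at most $kT \cdot \delta/(Tk) = \delta$, which is precisely the ``for all arms'' guarantee.

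The step I expect to require the most care is the coupling in the first paragraph: one must argue rigorously that adaptively choosing which arms to observe --- where that choice depends on previously observed rewards --- does not bias the empirical mean of any individual arm. This is the standard predictable-selection (``ghost sample'') issue, and it is resolved cleanly by the ordering in the protocol: each $a^s$ is fixed before $r^s$ is sampled, so conditioning on the observation schedule through time $s$ leaves $r^s(a)$ distributed as $\mathcal{F}(a)$. Passing to the fixed tape $(X_{a,j})_j$ makes this explicit and lets the fixed-$n$ Hoeffding bound apply verbatim, decoupling the random stopping count $N^t_a$ from the samples being averaged.
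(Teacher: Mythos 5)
Your proof is correct and follows essentially the same route as the paper's: a fixed-sample Hoeffding bound with the confidence radius chosen so the failure probability collapses to $\delta/(kT)$, followed by a union bound over the $k$ arms and the at most $T$ relevant counts/time steps. Your explicit coupling argument addressing the adaptivity of the observation schedule is a welcome rigorization of a step the paper's appendix treats implicitly, but it does not change the structure of the argument.
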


\paragraph{Layering technique.} The crux of our analysis lies in identifying and using two properties that the arms selected in one particular round, which we term \emph{layers}, satisfy. These properties are presented in the following definition and are crucial in extending the guarantees to UCB and Thompson Sampling (in the next two sections). 
\begin{definition}[Layering of selected arms]\label{defn:layering}
\vspace{0.1in}
All selected arms are placed in layers $\ell\in\crl*{1,2,\ldots}$. Arm $a^t$ is placed in the minimum layer $\ell$ such that it does not neighbor any arm already placed in layer $\ell$.
\end{definition}
For the active arm elimination algorithm we presented above (Algorithm~\ref{alg:AAE}), layers correspond to the respective rounds denoted by $\gamma$ there. We now note two important properties of the layers.
\begin{itemize}
\item Arms in the same layer must be independent of one another thereby forming an independent set. 
This is true as once an arm $a'$ is selected and put in a layer $\ell$, any neighbor $a\in\mathcal{N}(a')$ that is later selected, 
can no longer be placed in layer $\ell$ by definition of the layers.
\item When an arm $a$ is placed in layer $\ell$, it must have been observed at least $\ell-1$ times. This is true as $\ell$ is selected at the lowest layer in which the arm has not yet been observed. 
\end{itemize}
The key lemma of the layering technique is bounding the regret of all selected arms assuming that they are not selected after being observed too many times. In particular, let $\Lambda_a^t$ be the highest layer in which arm $a$ is placed until time step $t$ (upper bounding the times the arm is observed at any time it is selected). Then the following lemma gives a graph-based upper bound on the regret coming from all arms with appropriately bounded $\Lambda_a^t$:
\begin{lemma}\label{lem:layering}
\vspace{0.1in}
Let $L_a=\frac{L}{\Delta(a)^2}$ for all arms $a$ some value $L$. Let also $\Lambda_a^t$ be the highest layer arm $a$ is placed until time step $t$. Then  
\begin{align*}
    \sum_{t=1}^T\sum_{a\in\mathcal{V}}\mathbb{P}\brk*{\mathbf{1}\crl*{a^t=a, \Lambda_a^t\leq L_a}}\Delta(a)\leq 4\cdot \log(T) \cdot \max_{I\in\mathcal{I}(G)}\sum_{a\in I}\frac{L}{\Delta(a)}+1.
\end{align*}
\end{lemma}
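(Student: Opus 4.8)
The plan is to bound the random quantity inside the expectation \emph{pointwise}, for every realization of the algorithm's internal randomness and the rewards, and only then take expectations; since
\[
\sum_{t=1}^T\sum_{a\in\mathcal V}\mathbb{P}[a^t=a,\ \Lambda_a^t\le L_a]\,\Delta(a)=\mathbb{E}\left[\sum_{t=1}^T\sum_{a\in\mathcal V}\mathbf{1}\{a^t=a,\ \Lambda_a^t\le L_a\}\,\Delta(a)\right],
\]
a deterministic upper bound on the bracketed sum immediately yields the lemma. First I would reindex this sum by layers. Fix a realization and let $I_\ell\subseteq\mathcal V$ denote the set of arms placed in layer $\ell$ by being \emph{selected}; by the first layer property each $I_\ell$ is an independent set. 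For a fixed arm $a$, its successive selections are placed in strictly increasing layers: each selection records an observation of $a$ in its layer, so the next selection must go to a strictly higher (not-yet-observed) layer. Consequently, at the time step $t$ at which $a$ is selected and placed in layer $\ell$, this $\ell$ is the largest layer $a$ has occupied so far, i.e.\ $\Lambda_a^t=\ell$. Hence the indicator $\mathbf{1}\{a^t=a,\ \Lambda_a^t\le L_a\}$ equals $\mathbf{1}\{\ell\le L_a\}$, and the double sum collapses to $\sum_{\ell\ge 1}\sum_{a\in I_\ell:\,\ell\le L_a}\Delta(a)$.

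The key step is a single algebraic manipulation combined with the per-layer independence. For an arm $a$ counted in layer $\ell$ we have $\ell\le L_a=L/\Delta(a)^2$, which rearranges to $\Delta(a)\le \tfrac{1}{\ell}\cdot\tfrac{L}{\Delta(a)}$. Substituting this and noting that the arms being summed form a subset of the independent set $I_\ell$, each inner sum is controlled by the best independent set:
\[
\sum_{a\in I_\ell:\,\ell\le L_a}\Delta(a)\ \le\ \frac{1}{\ell}\sum_{a\in I_\ell:\,\ell\le L_a}\frac{L}{\Delta(a)}\ \le\ \frac{1}{\ell}\max_{I\in\mathcal{I}(G)}\sum_{a\in I}\frac{L}{\Delta(a)}.
\]

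It remains to sum the factors $1/\ell$ over layers. Since placement in layer $\ell$ requires at least $\ell-1$ prior observations (the second layer property) and there are at most $T$ observations in total, no layer index exceeds $T+1$, so $\sum_\ell \tfrac1\ell\le H_{T+1}\le 1+\ln(T+1)$, which is $O(\log T)$. This gives a deterministic bound of the form $\left(1+\ln(T+1)\right)\max_{I\in\mathcal{I}(G)}\sum_{a\in I}\tfrac{L}{\Delta(a)}$; taking expectations leaves it unchanged, and bounding the harmonic factor loosely by $4\log T$ (using for the small-$T$ edge cases that each selection contributes at most $\Delta(a)\le 1$, which the additive $+1$ absorbs) yields the stated inequality.

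The main obstacle I expect is not the arithmetic but getting the layer reindexing exactly right: justifying that $\Lambda_a^t$ equals the placement layer $\ell$ of the selection at time $t$ (via strict monotonicity of placement layers), and then correctly invoking the per-layer independence to replace $\sum_{a\in I_\ell}\tfrac{L}{\Delta(a)}$ by $\max_{I}\sum_{a\in I}\tfrac{L}{\Delta(a)}$. Everything else---the gap rearrangement $\Delta(a)\le\tfrac{1}{\ell}\tfrac{L}{\Delta(a)}$, the harmonic summation, and the passage from the pointwise bound to its expectation---is routine.
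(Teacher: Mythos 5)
Your proof is correct, but it takes a genuinely different route from the paper's. The paper buckets arms into dyadic \emph{phases} by gap size ($\Delta(a)\in(2^{-\phi},2^{-\phi+1}]$), bounds the contribution of each phase by $4\max_{I}\sum_{a\in I}L/\Delta(a)$ using that phase-$\phi$ arms occupy at most $L\cdot 2^{2\phi}$ layers and each layer's selected arms form an independent set, and then multiplies by the $\log T$ phases, handling arms with $\Delta(a)\le 1/T$ separately (that is where its additive $+1$ comes from). You instead sum \emph{layer by layer}: the rearrangement $\ell\le L/\Delta(a)^2\Rightarrow\Delta(a)\le\frac{1}{\ell}\cdot\frac{L}{\Delta(a)}$ gives each layer a $\frac{1}{\ell}$-fraction of $\max_I\sum_{a\in I}L/\Delta(a)$, and the harmonic sum over the at most $T$ layers a fixed arm can ever be selected into produces the logarithm. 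Your route avoids the phase bucketing and the separate treatment of tiny-gap arms, and yields a slightly better constant ($H_{T+1}$ versus $4\log_2 T$). One small inaccuracy: your claim that $\Lambda_a^t=\ell$ at the selection time is false in general --- an arm can be placed in a \emph{higher} layer earlier as a neighbor and later be selected into a lower one (node $a$ in Figure~\ref{fig:layering} is placed in layer $2$ at $t_2$ and selected into layer $1$ at $t_3$) --- but only the inequality $\ell\le\Lambda_a^t$ is needed to turn the indicator into $\mathbf{1}\{\ell\le L_a\}$ as an upper bound, and that inequality does hold, so the argument survives unchanged. The injectivity of the map from selection times of a fixed arm to layers (via strict monotonicity) is exactly the right justification for the reindexing.
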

\begin{proof}
For the purpose of our analysis, we group the layers into phases, where phase $\phi$ begins in the first layer $\ell$ such that no arm $a$ with $\Delta(a)> 2^{-\phi+1}$ is placed in any layer higher than $\ell$ and ends at the last layer $\ell'$ that still includes arms $a$ with $\Delta(a)> 2^{-\phi}$. All arms $a$ with gap $\Delta(a)\in (2^{-\phi},2^{-\phi+1}]$ are associated with phase $\phi$.

We now evaluate the contribution to the regret of the LHS from arms associated with phase $\phi$. All these arms have gap at most $2^{-\phi+1}$ which therefore upper bounds the expected regret at these steps. The LHS focuses on the event that these arms appear only in layers smaller than $\frac{L}{\Delta(a)^2}\leq \frac{L}{2^{-2\phi}}$. Letting $\mathcal{V}_{\phi}$ be the arms associated with phase $\phi$ and $G_{\phi}$ be the subgraph with only arms $\mathcal{V}_{\phi}$, the contribution from these arms in the LHS is:
\begin{align*}
 \sum_{t=1}^T\sum_{a\in\mathcal{V}_{\phi}}\mathbb{P}\brk*{\mathbf{1}\crl*{a^t=a, \Lambda_a^t\leq L_a}}\Delta(a)&\leq \max_{I\in\mathcal{I}(G_{\phi})}\sum_{a\in I}
\frac{L}{2^{-2\phi}}\cdot 2^{-\phi+1}
\leq 4 \cdot \max_{I\in\mathcal{I}(G)}\sum_{a\in I}\frac{L}{\Delta(a)}
\end{align*}
Phases $\phi\leq\log(T)$ each contribute one such term which leads to the additional $\log(T)$ in the RHS. For arms with $\Delta(a) \leq 1/T$, the expected regret using such arms is bounded by at most 1 overall.
\end{proof}
We now apply the previous lemma to directly show a regret guarantee based on the gaps of the independet sets for the active arm elimination algorithm. 
\begin{theorem}\label{thm:active_fixed_graph_new}
\vspace{0.1in}
Algorithm~\ref{alg:AAE} has expected regret bounded as
$$\bm{R_T}\leq 32
\cdot
\ln(2kT/\delta)\cdot\log(T)\underset{I\in
\mathcal{I}(G)}\max\sum_{a\in I}\frac{1}{\Delta(a)}+T\delta +1$$
Setting $\delta=\frac{1}{T}$, we obtain
a bound of $\bm{R_T}=\tilde{O}(\sum_{a\in I}\frac{1}{\Delta(a)})$ for some Independent Set $I$ of the underlying graph.
\end{theorem}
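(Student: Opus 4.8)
The plan is to apply Lemma~\ref{lem:layering} with an appropriate choice of the constant $L$, having first established that the high-probability event of Lemma~\ref{lem:concentration_new} controls how often each suboptimal arm can be observed before elimination. First I would condition on the ``good event'' $\mathcal{E}$ that the concentration bound of Lemma~\ref{lem:concentration_new} holds simultaneously for all arms at all time steps; by that lemma this occurs with probability at least $1-\delta$. On the complementary event, which has probability at most $\delta$, I would bound the regret crudely by $T$ (since each per-step gap is at most $1$), contributing the additive $T\delta$ term in the statement.

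Next, working on $\mathcal{E}$, I would show that a suboptimal arm $a$ cannot be observed too many times before it is eliminated. The key arithmetic is the standard active-arm-elimination argument: once $a$ has been observed enough times that its confidence radius $\sqrt{\ln(2Tk/\delta)/(2N_a^t)}$ is smaller than $\Delta(a)/4$, the confidence intervals of $a$ and $a^\star$ can no longer overlap, so $a^\star$ is still active (on $\mathcal{E}$ its interval always contains $\mu(a^\star)$, so it is never eliminated) and forces the elimination of $a$. Solving $\sqrt{\ln(2Tk/\delta)/(2N)}<\Delta(a)/4$ for $N$ gives a threshold of the form $N_a^t \le 8\ln(2Tk/\delta)/\Delta(a)^2$, which is exactly $L_a = L/\Delta(a)^2$ with $L = 8\ln(2Tk/\delta)$. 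This is the step I expect to require the most care: I must translate the elimination condition into a clean bound on the highest layer $\Lambda_a^t$ in which $a$ can be \emph{selected}, using the second layering property (an arm placed in layer $\ell$ has been observed at least $\ell-1$ times) so that $\Lambda_a^t \le L_a$ holds on $\mathcal{E}$ for every selected suboptimal arm. Because layers correspond exactly to the rounds $\gamma$ of Algorithm~\ref{alg:AAE}, and each round plays a maximal independent set whose maximality guarantees every arm is observed once per round, this correspondence is transparent, but I would state it explicitly to justify invoking the lemma.

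Having shown that on $\mathcal{E}$ the indicator $\mathbf{1}\{a^t=a,\ \Lambda_a^t \le L_a\}$ captures every selection of a suboptimal arm, I would write the conditional expected regret as $\sum_t \sum_a \mathbb{P}[a^t=a,\ \Lambda_a^t\le L_a]\,\Delta(a)$ and apply Lemma~\ref{lem:layering} directly. Substituting $L = 8\ln(2Tk/\delta)$ into the bound $4\log(T)\max_{I}\sum_{a\in I} L/\Delta(a) + 1$ yields $32\ln(2kT/\delta)\log(T)\max_{I\in\mathcal{I}(G)}\sum_{a\in I}\tfrac{1}{\Delta(a)} + 1$. Adding back the $T\delta$ contribution from the bad event gives the claimed bound, and setting $\delta = 1/T$ makes $T\delta = 1$ a constant and turns $\ln(2kT/\delta)=\ln(2kT^2)$ into a logarithmic factor, producing the $\tilde{O}(\sum_{a\in I}1/\Delta(a))$ guarantee. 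The only genuine obstacle is the elimination-threshold computation in the second paragraph; everything else is bookkeeping and a clean invocation of the two lemmas.
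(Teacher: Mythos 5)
Your proposal is correct and follows essentially the same route as the paper's own proof: condition on the concentration event of Lemma~\ref{lem:concentration_new} (paying $T\delta$ on its complement), derive the elimination threshold $N_a^t \ge 8\ln(2Tk/\delta)/\Delta(a)^2$ from the $\Delta(a)/4$ confidence-radius condition, conclude $\Lambda_a^t \le L/\Delta(a)^2$ with $L = 8\ln(2Tk/\delta)$, and invoke Lemma~\ref{lem:layering}. The only point to state slightly more carefully is that the elimination condition also requires the confidence radius of $a^\star$ to be below $\Delta(a)/4$, which holds because the maximal-independent-set rounds guarantee $a^\star$ is observed in every round as well --- a fact you already note when identifying layers with rounds.
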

\begin{proof}
Recall that regret can be expressed as
$\bm{R_T}=\sum_{t}\mathbb{E}[\Delta(a^t)]$. It will be useful to write this as
$$\bm{R_T}=\sum_{t=1}^T\sum_{a\in\mathcal{V}}\mathbb{P}\brk*{\mathbf{1}\crl*{a^t=a}}\Delta(a)$$
To bound the regret, we first observe that by Lemma \ref{lem:concentration_new}, the probability that there exists an arm whose empirical mean fails to be in its corresponding confidence interval is bounded by $\delta$. The maximum regret we can get over $T$ steps is at most $T$ as rewards at each time step are bounded in $[0,1]$, so the unlikely event of an empirical mean falling outside the confidence interval (including also when the optimal arm is eliminated) contributes at most $\delta T$ to the expected regret. For the rest of the analysis we assume that the confidence intervals include the actual mean for each arm throughout the algorithm.

An arm $a$ is definitely eliminated when the upper bound of its confidence interval is below the lower bound of the confidence interval of $a^{\star}$. The distance between the actual mean and any of the lower or upper bounds of the confidence interval of an arm $a$ can differ by $2\cdot\sqrt{\frac{\ln(2Tk/\delta)}{2N^t_a}}$ as we assume that all means lie inside the confidence interval. Since the actual mean of arm $a$ and $a^{\star}$ differ by $\Delta(a)$, in order to ensure that arm $a$ is eliminated, the lower bound of $a^{\star}$ must be within $\frac{\Delta(a)}{2}$ of $\mu(a^{\star})$.
Similarly, the upper bound of $a$ must be within $\frac{\Delta(a)}{2}$ of $\mu(a)$.
To guarantee this we need that $\sqrt{\frac{\ln(2Tk/\delta)}{2N^t_a}}\le \frac{\Delta(a)}{4}$ and $\sqrt{\frac{\ln(2Tk/\delta)}{2N^t_{a^{\star}}}}\le \frac{\Delta(a)}{4}$. This happens when $N_a^t$ and $N_{a^\star}^t$ are both at least 
$$N_a^t, N_{a^\star}^t \ge \frac{8\ln(2Tk/\delta)}{\Delta(a)^2}.$$
Since, via layering, the arm is added to the smallest
 layer that it is not yet observed, the above implies that arm $a$ is never added to a layer larger than $\frac{L}{\Delta(a)^2}$ for $L=8\ln(2Tk/\delta)$. By Lemma~\ref{lem:layering}, when no confidence interval is violated, the regret is at most $4\cdot \log(T)\cdot\max_{I\in\mathcal{I}(G)}\sum_{a\in I}\frac{8\ln(2Tk/\delta)}{\Delta(a)}+1$. 
\end{proof}
We note that the round-robin version in the algorithm is, in fact, not necessary (see Remark~\ref{rem:aae_min_observations}). 
\begin{remark} \label{remark}
\vspace{0.1in}
In the above analysis, we discussed fixed graphs and provided regret guarantees based on independent set. In contrast, Buccapatnam et al. \cite{Buccapatnam:2014} use dominating set and Cohen et al. \cite{DBLP:conf/icml/CohenHK16} focus on evolving unknown graphs. Our bounds can extend in either of these directions by using a dominating set instead of an independent set in the algorithm and by sampling uniformly at random among active arms and applying Turan's theorem. However, using fixed graphs and independent set is crucial in extending our results beyond Active Arm Elimination (Thompson Sampling and UCB); this is why we present our analysis with respect to this setting. We note that one cannot hope for regret bounds based on the minimum dominating set for UCB and Thompson Sampling that do not use the feedback graph to explicitly target exploration as we discuss in Section~\ref{sec:conclusions}. \end{remark}

\section{Upper Confidence Bound }
\label{sec:ucb}
In this section, we present our first main result: combining gaps of the arms and the independent set of the graph $G$ for bounding the expected regret of UCB; in the next section we extend this to Thompson Sampling. Note that unlike our version of Active Arm Elimination in Section \ref{sec:active_arm_elimination} that explicitly selected independent sets neither UCB nor Thompson Sampling needs any change to adapt to the graph structure. 

The original UCB algorithm of Auer et al. \cite{AuerCF2002} is based on the same confidence intervals as Active Arm Elimination\footnote{To avoid using the time horizon $T$ in the algorithm, we can use the current time $t$ instead of $T$ in defining confidence intervals.}, but is using them in an optimistic way: at each iteration it selects the arm whose upper confidence bound is as high as possible. The natural extension of this with a graph feedback, suggested by Caron et al. \cite{UCB-clique} and termed UCB-N, selects the arm in precisely the same way but also updates the estimates of the neighbors of the selected arm. The algorithm is formally described in Algorithm~\ref{alg:UCB}.

\begin{algorithm}[]
\caption{UCB-N}
\label{alg:UCB}
\begin{algorithmic}
 \STATE Initialize time as $t=1$
 \WHILE{$t \le T$}
 \STATE $a^t=\underset{a\in\mathcal{V}}{\textrm{argmax}}\prn*{\Tilde{\mu}^t(a)+\sqrt{\frac{\ln (\frac{2kT }{\delta})}{2N_a^t}}}$
 \STATE Select arm $a^t$ and earn reward $r^t(a^t)$ 
 \STATE Observe the samples from all arms in $\mathcal{N}(a^t)$
   \STATE Move to the next time step: $t\leftarrow t+1$
  \ENDWHILE
  \end{algorithmic}
\end{algorithm}

We analyze the expected regret of the UCB-N algorithm by relating it to a run of the variant of Active Arm Elimination considered in Section \ref{sec:active_arm_elimination}. A round there corresponded to selecting arms of a maximal independent set over the arms not yet eliminated. We divide the run of UCB into layers where a layer corresponds to a round of Active Arm Elimination. When we select an arm, we place it in the minimum layer in which it has not yet been observed (see Definition~\ref{defn:layering}). We illustrate this layering construction pictorially in Figure \ref{fig:layering}, where the sequence of nodes as they are selected are put in layers 1, 2, and then layer 1 again despite being selected afterwards.
Although arms
in a layer are no longer selected contiguously,
Lemma~\ref{lem:layering}
shows that the layering technique still applies.  We formalize the regret guarantee 
in the following theorem. 
\begin{figure}[]
\centering
\begin{tikzpicture}
\tikzstyle{vertex}=[circle,fill=black!10,scale=0.8]
\node[vertex,fill=green](A) at (0,1){a};
\node[] (AR) at (0, 1.5) {\scriptsize $t_3$};
\node[vertex,fill=green](C) at (2,1){c};
\node[] (CR) at (2, 1.5) {\scriptsize $t_1$};
\node[vertex](B) at (1,1){b};
\node[] (BR) at (1, 1.5) {\scriptsize $t_1$};
\node[vertex](D) at (2.5,0){d};
\node[] (DR) at (2.5, 0.5) {\scriptsize $t_1$};
\node[vertex](E) at (1,0){e};
\node[] (ER) at (0.5, 0) {\scriptsize $t_1$};
\path[draw,thick,color=orange, -] (C) --  (E);
\path[draw,thick,-] (E) --  (A);
\path[draw,thick,-] (E) --  (B);
\path[draw,thick,color=orange,-] (C) --  (D);
\path[draw,thick,color=orange,-] (C) --  (B);
\node[] (I) at (1, 2) {\small Layer $1$};

\node[vertex](A1) at (6,1){a};
\node[] (AR1) at (6, 1.5) {\scriptsize $t_2$};
\node[vertex](C1) at (6+2,1){c};
\node[] (CR1) at (6+2, 1.5) {\scriptsize $t_2$};
\node[vertex](B1) at (6+1,1){b};
\node[] (BR1) at (6+1, 1.5) {\scriptsize $t_2$};
\node[vertex](D1) at (6+2.5,0){d};
\node[vertex,fill=green](E1) at (6+1,0){e};
\node[] (ER1) at (6+0.5, 0) {\scriptsize $t_2$};
\path[draw,thick,color=orange, -] (C1) --  (E1);
\path[draw,thick,color=orange,-] (E1) --  (A1);
\path[draw,thick,color=orange,-] (E1) --  (B1);
\path[draw,thick,-] (C1) --  (D1);
\path[draw,thick,-] (C1) --  (B1);
\node[] (I) at (6+1, 2) {\small Layer $2$};

\node[vertex](A2) at (12+0,1){a};
\node[vertex,fill=green](C2) at (12+2,1){c};
\node[] (CR2) at (12+2, 1.5) {\scriptsize $t_4$};
\node[vertex](B2) at (12+1,1){b};
\node[] (BR2) at (12+1, 1.5) {\scriptsize $t_4$};
\node[vertex](D2) at (12+2.5,0){d};
\node[] (DR2) at (12+2.5, 0.5) {\scriptsize $t_4$};
\node[vertex](E2) at (12+1,0){e};
\node[] (ER2) at (12+0.5, 0) {\scriptsize $t_4$};
\path[draw,thick,color=orange, -] (C2) --  (E2);
\path[draw,thick,-] (E2) --  (A2);
\path[draw,thick,-] (E2) --  (B2);
\path[draw,thick,color=orange,-] (C2) --  (D2);
\path[draw,thick,color=orange,-] (C2) --  (B2);
\node[] (I) at (12+1, 2) {\small Layer $3$};

\end{tikzpicture}
\caption{There are $k=5$ arms; $\crl{a,b,c,d,e}$. We show the first four steps $\crl{t_1,t_2,t_3,t_4}$ of the layering construction (for the first $3$ layers); the time next to a node denotes the first time it is observed in the layer. The nodes selected in these times are $c,e,a,c$; we denote these nodes by green. Orange edges show which nodes were observed for the first time in the layer. Note that, at time $t_3$, the selected node $a$ is put in the first layer despite having been observed in a higher layer (layer $2$). Also note that a node may be observed by multiple selected nodes in the same layer  (e.g. node $e$ in layer $1$);
 this does not interfere with our analysis as more observations 
 only help the concentration bounds.
 }
\label{fig:layering}
\end{figure}
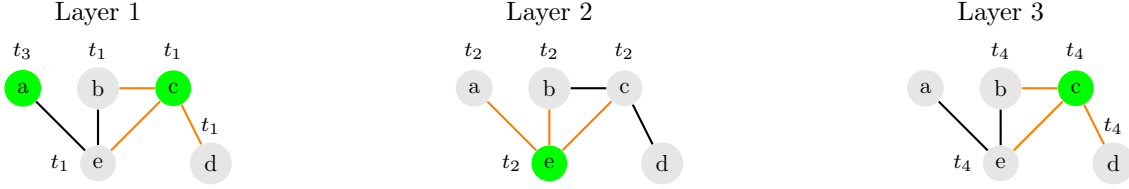

\vspace{0.3em}
\begin{theorem}
\label{thm:ucb_main}
The expected regret of the UCB-N algorithm (Algorithm~\ref{alg:UCB}) can be bounded as
$$\bm{R_T}\leq
8\cdot\ln(2kT/\delta)\cdot\log(T)\underset{I\in \mathcal{I}(G)}\max\sum_{a\in I}\frac{1}{\Delta(a)}+T\delta+1
$$
By setting $\delta=\frac{1}{T}$ we have $\bm{R_T}= \tilde{O}(\sum_{a\in I}\frac{1}{\Delta(a)})$ for an independent set $I$ of the graph.
\end{theorem}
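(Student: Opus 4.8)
The plan is to follow the same template as the Active Arm Elimination proof of Theorem~\ref{thm:active_fixed_graph_new}, replacing only the algorithm-specific argument that bounds how often a suboptimal arm can be selected. The decisive observation is that the layering of Definition~\ref{defn:layering} and its two properties (arms placed in a layer form an independent set, and an arm placed in layer $\ell$ has been observed at least $\ell-1$ times) are defined purely from the sequence of selected arms together with the graph; they hold verbatim for UCB-N even though it never explicitly builds independent sets. Consequently Lemma~\ref{lem:layering} is available as a black box, and the whole task reduces to producing a value $L$ such that, on the high-probability event that all confidence intervals are valid, every selection of a suboptimal arm $a$ places it in a layer of index at most $L/\Delta(a)^2$.

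First I would condition on the concentration event of Lemma~\ref{lem:concentration_new}, namely that $|\tilde{\mu}^t(a)-\mu(a)|\le\sqrt{\ln(2kT/\delta)/(2N_a^t)}$ holds simultaneously for all arms and all times; this fails with probability at most $\delta$ and, since per-step regret is at most $1$, the failure event contributes at most $T\delta$ to $\bm{R_T}$. Writing $\bm{R_T}=\sum_{t}\sum_{a}\mathbb{P}[a^t=a]\,\Delta(a)$ as in the warm-up, it then suffices to bound the contribution on the good event.

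The UCB-specific step is where optimism enters. Suppose $a$ is selected at time $t$. By the selection rule its index dominates that of $a^\star$, and validity of the interval for $a^\star$ gives $\tilde{\mu}^t(a^\star)+\sqrt{\ln(2kT/\delta)/(2N_{a^\star}^t)}\ge\mu(a^\star)$, so the index of $a$ is itself at least $\mu(a^\star)$. Validity of the interval for $a$ gives $\tilde{\mu}^t(a)\le\mu(a)+\sqrt{\ln(2kT/\delta)/(2N_a^t)}$, hence $\mu(a)+2\sqrt{\ln(2kT/\delta)/(2N_a^t)}\ge\mu(a^\star)$, i.e. $2\sqrt{\ln(2kT/\delta)/(2N_a^t)}\ge\Delta(a)$. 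Rearranging yields $N_a^t\le 2\ln(2kT/\delta)/\Delta(a)^2$. In words, under valid intervals a suboptimal arm is never pulled once it has been observed more than $2\ln(2kT/\delta)/\Delta(a)^2$ times; this is the exact analogue of the elimination threshold in the warm-up, but a factor of four smaller because UCB only needs the arm's own optimistic index to fall behind, rather than a full separation of two confidence intervals.

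Combining this with the second layering property $N_a^t\ge\Lambda_a^t-1$, every selection of $a$ on the good event places it in a layer of index at most $2\ln(2kT/\delta)/\Delta(a)^2$, so I can invoke Lemma~\ref{lem:layering} with $L=2\ln(2kT/\delta)$. This gives a contribution of at most $4\log(T)\cdot L\cdot\max_{I\in\mathcal{I}(G)}\sum_{a\in I}\frac{1}{\Delta(a)}+1=8\ln(2kT/\delta)\log(T)\max_{I\in\mathcal{I}(G)}\sum_{a\in I}\frac{1}{\Delta(a)}+1$; adding back the $T\delta$ from the bad event yields the stated bound, and $\delta=1/T$ produces the $\tilde{O}$ form. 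I expect the only genuinely delicate point to be the two-sided use of the confidence intervals in the optimism argument (together with absorbing the harmless off-by-one between $N_a^t$ and the layer index into $L$, exactly as in the warm-up); once $L$ is pinned down, Lemma~\ref{lem:layering} discharges all of the graph-theoretic content automatically, which is precisely the payoff of the layering technique.
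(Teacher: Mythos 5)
Your proposal is correct and follows essentially the same route as the paper's own proof: condition on the concentration event of Lemma~\ref{lem:concentration_new}, use optimism to show that on that event a selected suboptimal arm $a$ satisfies $N_a^t\le 2\ln(2kT/\delta)/\Delta(a)^2$, and then invoke Lemma~\ref{lem:layering} with $L=2\ln(2kT/\delta)$. The paper states the same threshold in contrapositive form (once $N_a^t$ exceeds it, $a$'s index falls below $\mu(a^\star)$ and hence below the index of $a^\star$), and it likewise notes the factor-of-four saving over the elimination argument, so there is nothing to add.
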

\begin{proof}
As in the proof of Theorem \ref{thm:active_fixed_graph_new}, we start by pointing out that by Lemma \ref{lem:concentration_new} with probability at least $1-\delta$ the means of all the arms will stay in the confidence intervals around their empirical mean throughout the algorithm. The $\delta$ probability that this may fail can only contribute $\delta T$ to the expected regret, so for the rest of the analysis we will assume this does not happen. 

Recall that the Active Arm Elimination analysis was divided into phases, where in later phases arms with larger gaps are already eliminated. While UCB does not actively eliminate arms, we argue next that arms with large $\Delta$ values are not 
selected in high layers unless our assumption at the beginning of the proof about confidence intervals fails. By the definition of our confidence bounds and our assumption that the means of all arms remain in the confidence bounds throughout, once $$N_a^t \ge \frac{2\ln(2kT/\delta)}{\Delta(a)^2}$$
the upper confidence bound of arm $a$ is below the the mean of the optimal arm $a^\star$, and hence cannot be the arm selected by UCB. This comes from the same argument that was used in Active Arm Elimination, except we only need the upper bound for arm $a$ to stay below $\mu(a)+\Delta(a)$ and not $\mu(a)+\Delta(a)/2$ as was the case there. (This difference is what improves the bound by a factor of 4 compared to Theorem \ref{thm:active_fixed_graph_new}.)
In particular this implies that, when the confidence intervals are not violated, arm $a$ is never placed in any layer $\Lambda_a^t> \frac{L}{\Delta(a)^2}$ for $L=2\ln(2kT/\delta)$.
\footnote{If we use
the current time in defining confidence intervals, the confidence interval of an unseen arm will grow with time. This may cause the arm to be selected later; however, it will always go in a layer lower than the current bound. }

Similarly to the proof of Theorem~\ref{thm:active_fixed_graph_new}, applying Lemma \ref{lem:layering}, when no confidence interval is violated, the regret is at most $4\cdot \log(T)\cdot \max_{I\in\mathcal{I}(G)}\frac{2\ln(2Tk/\delta)}{\Delta(a)}+1$, which concludes the proof.
\end{proof}
By standard techniques for taking the worst case over $\Delta$'s, we also derive a 
gap-independent bound.
\vspace{0.1in}
\begin{corollary}
\label{cor:UCB}
The expected regret of UCB-N is bounded by  $2+ 4\sqrt{2\cdot\alpha T \ln(2kT^2)\cdot \log T}$ where $\alpha$ is the size of the maximum independent set. 
\end{corollary}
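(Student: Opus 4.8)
The plan is to derive the gap-independent bound directly from the gap-dependent guarantee of Theorem~\ref{thm:ucb_main} via the standard threshold (peeling) trick: split the steps according to whether the selected arm's gap lies below or above a parameter $\Delta_0$ that we optimize at the end. Fixing $\delta=1/T$ in Theorem~\ref{thm:ucb_main} gives $R_T \le 8\ln(2kT^2)\log(T)\max_{I\in\mathcal{I}(G)}\sum_{a\in I}\frac{1}{\Delta(a)} + 2$, so everything reduces to controlling the independent-set sum in terms of $\alpha$.

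First I would write $R_T = \sum_t \mathbb{E}[\Delta(a^t)]$ and separate each step's contribution depending on whether $\Delta(a^t) \le \Delta_0$ or $\Delta(a^t) > \Delta_0$. The small-gap part is immediate: each of the $T$ steps contributes at most $\Delta_0$, so it is bounded by $\Delta_0 T$. For the large-gap part I would re-run the regret analysis of Theorem~\ref{thm:ucb_main} restricted to arms with $\Delta(a) > \Delta_0$; concretely, applying Lemma~\ref{lem:layering} to the subgraph induced by these arms bounds their total contribution by $8\ln(2kT^2)\log(T)\max_{I}\sum_{a\in I:\,\Delta(a)>\Delta_0}\frac{1}{\Delta(a)}$. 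The key combinatorial observation is that every independent set has at most $\alpha$ elements and every surviving reciprocal gap satisfies $\frac{1}{\Delta(a)} < \frac{1}{\Delta_0}$, so this restricted sum is at most $\alpha/\Delta_0$, making the large-gap contribution at most $8\alpha\ln(2kT^2)\log(T)/\Delta_0$.

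Combining the two parts yields $R_T \le \Delta_0 T + \frac{8\alpha\ln(2kT^2)\log(T)}{\Delta_0} + 2$, and it remains to optimize over $\Delta_0$. Writing $C = 8\alpha\ln(2kT^2)\log(T)$ and applying AM-GM (equivalently, choosing $\Delta_0 = \sqrt{C/T}$) makes the two $\Delta_0$-dependent terms equal and gives $\Delta_0 T + C/\Delta_0 = 2\sqrt{CT}$. Substituting back, $2\sqrt{CT} = 2\sqrt{8\alpha T\ln(2kT^2)\log T} = 4\sqrt{2\alpha T\ln(2kT^2)\log T}$, which produces the claimed bound $R_T \le 2 + 4\sqrt{2\alpha T\ln(2kT^2)\log T}$.

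The one step that needs care—and the main obstacle—is justifying that the large-gap contribution is genuinely governed by the independent-set sum \emph{restricted} to arms with $\Delta(a) > \Delta_0$, rather than the full sum; one cannot simply split inside the independent-set bound, since the small-gap reciprocals $\frac{1}{\Delta(a)}$ are large and must instead be absorbed by the fact that $\Delta(a^t)$ itself is small. This is why the peeling has to be performed on the regret sum, and I would lean on the structure of the layering proof: the bound of Theorem~\ref{thm:ucb_main} factors through Lemma~\ref{lem:layering}, whose phases argument localizes each arm's contribution to its own gap, so discarding the small-gap phases leaves the argument intact while shrinking the relevant independent sets. Only after this restriction does the crude estimate $\alpha/\Delta_0$ become available; everything else is the routine threshold optimization.
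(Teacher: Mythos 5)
Your proof is correct and follows essentially the same route as the paper: split the regret at a gap threshold, bound the small-gap steps by $\Delta_0 T$, bound the large-gap steps via Theorem~\ref{thm:ucb_main} restricted to arms with gap above the threshold (so the independent-set sum is at most $\alpha/\Delta_0$), and optimize. Your extra care about why the restriction to large-gap arms is legitimate---re-running the layering/phases argument rather than splitting inside the independent-set sum---is exactly what the paper's terse phrase ``using Theorem~\ref{thm:ucb_main} for arms with gaps greater than some parameter $\Delta$'' implicitly relies on.
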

\begin{proof}
To get the gap-independent bound, we follow the standard bandit technique using Theorem~\ref{thm:ucb_main} for arms with gaps greater than some parameter $\Delta$.
\begin{align*}
    \bm{R_T}&=\sum_{t}\mathbb{E}[\Delta(a^t)]
    \leq\sum_{t:\Delta(a^t)>\Delta}\mathbb{E}[\Delta(a^t)] + T\Delta\\
    &\le 1+ T\delta + 8\cdot\ln(2kT/\delta)\log(T)\frac{\alpha}{\Delta} +T\Delta.
\end{align*}
which implies the result by choosing $\Delta=\sqrt{\frac{8\alpha\ln(2kT/\delta)\cdot\log(T)}{T}}$ and $\delta=1/T$.
\end{proof}
\begin{remark}\label{rem:aae_min_observations}
\vspace{0.1in}
In the previous section, we presented active arm elimination as selecting non-eliminated arms in a round-robin manner from an independent set. This presentation helps the exposition of the layering technique. However, we note that the above layering analysis can be used to show the same guarantee for a simpler
variant of Active Arm Elimination where we select the non-eliminated arm observed the fewest number of times, i.e., select the arm $\argmin_{a\in\mathcal{A}}N_a^t$. 
\end{remark}

\section{Thompson Sampling}
 \label{sec:thompson_sampling}
In this section,
we show that the Thompson Sampling algorithm of \cite{Thompson-Sampling-Original} also obtains similar guarantees. Similar to UCB, we do not alter the decisions of Thompson Sampling to accommodate the graph feedback structure but instead just update the information for neighbors of the selected arm. This natural extension, termed TS-N, was initially suggested in the Bayesian setting by Tossou et al.~\cite{Thompson-Sampling}. We now provide the main ingredients of this algorithm. 

\paragraph{TS-N algorithm.} The frequentist (prior-free) approach to Thompson Sampling starts with a Beta distribution {Beta}($\alpha,\beta$) for all arms with $\alpha=\beta=1$. A Beta distribution Beta$(\alpha,\beta)$ 
is defined with the following probability density function 
$$f_{\alpha,\beta}(x)=\frac{x^{\alpha-1}(1-x)^{\beta-1}}{B(\alpha,\beta)}$$ 
where $B(\alpha,\beta)$ is the normalization factor. At every time step $t$, the algorithm draws independent 
samples $\theta^t(a)$ from the 
Beta distribution of 
each arm $a$, selects the arm with the highest sample value and updates its posterior distribution using a Bernoulli trial with success probability equal to the reward obtained for this arm. The only change with graph feedback is that we also observe the reward for the neighbours of the selected arm, so we also update their distributions (see Algorithm \ref{alg:TS} for a formal description). The two key insights for using Beta distribution are that with the Bernoulli update used, its mean is the empirical mean of the rewards, and that the Bayesian posterior of a Bernoulli trial to a Beta distribution is also a Beta distribution.

\begin{algorithm}[]
\caption{Thompson Sampling with Graph Feedback}
\label{alg:TS}
\begin{algorithmic}
 \STATE Initialize the success and failure observed for each arm to zero; $S_a=0,F_a=0$ $\forall a\in\mathcal{V}$
 \STATE Initialize time $t=0$
 \WHILE{$t \le T$}
  \FOR{$a\in\mathcal{V}$}
  \STATE Sample $\theta^t(a)
  \sim$ Beta$(S_a+1,F_a+1)$
 \ENDFOR
 \STATE $a^t=\textrm{argmax}_{a\in\mathcal{V}}\theta^t(a)
 $;
 \STATE Select  arm $a^t$ and earn  reward $r^t(a^t)$
 \FOR{all arms $a=a^t$ or 
 $a \in \mathcal{N}(a^t)$}
 \STATE Perform a Bernoulli trial with success probability $r^t(a)$ and observe outcome $r_a^t\in\{0,1\}$
 \IF{$r^t_a=1$}
 \STATE $S_a=S_a+1$
    \ELSE
    \STATE $F_a=F_a+1$
    \ENDIF
 \ENDFOR
  \ENDWHILE
  \end{algorithmic}
\end{algorithm}

\paragraph{Outline of analysis of Thompson Sampling for Bandits.}
The general idea for analyzing stochastic bandits is to observe samples from all arms enough times to be confident that the empirical means are close enough to the actual means with high probability in order to identify the best arm. In Active Arm Elimination and UCB, we already showed that the regret incurred by the algorithm is only until all suboptimal arms have been observed enough times since thereafter, with high probability, only the optimal arm is selected. The regret in this case is generally $\sum_{a\in\mathcal{V}}\frac{\ln kT}{\Delta(a)}$ in a non-graph setting as observing any non-optimal arm $a$ at most $\frac{\ln kT}{\Delta(a)^2}$ times is sufficient for the empirical means to sufficiently concentrate.

Thompson Sampling is different in that the algorithm incurs regret
from two sources. Once the empirical means of the optimal arm $a^{\star}$ and the suboptimal arms are all concentrated well enough, the Thompson Sampling algorithm will also select the optimal arm with high probability. One source of regret is the usual regret incurred until all the suboptimal arms have been observed enough times. The other comes from the case where
the optimal arm has not been observed often enough; then its distribution is too diffuse which can cause a suboptimal arm to be selected. 

For the case of bandits, Agrawal and Goyal \cite{Shipra-2017} show that 
the expected number of times a suboptimal arm $a$ can be selected in this second case is bounded by $\frac{\ln kT}{\Delta(a)^2}$. Summing over all the arms they thus provide a regret incurred in this case by $\sum_{a\in\mathcal{V}}\frac{\ln kT}{\Delta(a)}$.

\paragraph{Our analysis.} We extend this analysis to obtain graph-based regret bounds similar to UCB. For the first case (in Lemma \ref{lm:TS-layering}) we use the layering argument of the previous subsection to bound the regret obtained from suboptimal arms $a$ until they have been observed at least $L_a:=\frac{16\ln kT}{\Delta(a)^2}$ times. We call a suboptimal arm $a$ \emph{saturated} if it has been observed at least $L_a$ times and \emph{unsaturated} otherwise. We define layers as we did for UCB: 
when we select an unsaturated arm $a$, we place the selected arm and its neighbors in the lowest layer the selected arm has not yet been 
observed. 
\vspace{0.3em}
\begin{lemma}\label{lm:TS-layering} The regret from selecting unsaturated arms is bounded by
$$\sum_{t=1}^T\sum_{a\in\mathcal{V}}\mathbb{P}\{a^t=a;N_a^t\leq L_a\}\Delta(a)\leq 64\cdot \log(kT)\log(T)\cdot\underset{I\in \mathcal{I}(G)}\max\sum_{a\in I}\frac{1}{\Delta(a)}+1$$
\end{lemma}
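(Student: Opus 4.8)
The plan is to show that this lemma is essentially a direct corollary of the layering Lemma~\ref{lem:layering}, once the layering is set up for the unsaturated selections of TS-N. First I would make the construction precise: process the steps $t=1,\ldots,T$ in order, and whenever the selected arm $a^t$ is unsaturated (i.e.\ $N_{a^t}^t\le L_{a^t}$), place $a^t$ and all of its neighbors in the lowest layer in which $a^t$ has not yet been observed, exactly as in Definition~\ref{defn:layering}. Selections of \emph{saturated} arms are simply ignored by the layering; they are accounted for separately and never appear on the left-hand side of this lemma.

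Next I would verify that the two defining properties of layers survive this restriction. For the independence property: if an unsaturated arm $a'$ is placed (by being selected) in layer $\ell$, then every neighbor of $a'$ is simultaneously marked as observed in layer $\ell$; hence if a neighbor $a\in\mathcal{N}(a')$ is later selected while unsaturated, it has already been observed in layer $\ell$ and must go to a strictly higher layer. Thus the arms placed by selection into a common layer form an independent set of $G$, and this conclusion is unaffected by the fact that some of the observed neighbors may themselves be saturated. For the observation-count property: placing $a$ in layer $\ell$ requires that $a$ was already observed at least once in each of layers $1,\ldots,\ell-1$, so at the moment $a^t=a$ is placed in layer $\ell$ we have $N_a^t\ge \ell-1$.

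The key translation step is then to match the event in the statement to the event in Lemma~\ref{lem:layering}. On the event $\{a^t=a;\,N_a^t\le L_a\}$, the observation-count property gives $\ell\le N_a^t+1\le L_a+1$ for the layer $\ell$ into which $a$ is placed at time $t$; since $N_a^s\le N_a^t$ for all $s\le t$, every earlier unsaturated selection of $a$ also landed in a layer at most $L_a+1$, so $\Lambda_a^t\le L_a+1$. Up to the additive $+1$ (which only matters for arms of vanishing gap and is absorbed into constants and the trailing $+1$), this is exactly the event $\{a^t=a;\,\Lambda_a^t\le L_a\}$ over which Lemma~\ref{lem:layering} sums. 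Applying that lemma with $L=16\ln(kT)$, so that $L_a=\frac{L}{\Delta(a)^2}=\frac{16\ln(kT)}{\Delta(a)^2}$, yields the bound $4\log(T)\cdot\max_{I\in\mathcal{I}(G)}\sum_{a\in I}\frac{16\ln(kT)}{\Delta(a)}+1=64\log(kT)\log(T)\max_{I\in\mathcal{I}(G)}\sum_{a\in I}\frac{1}{\Delta(a)}+1$, which is precisely the claimed inequality.

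The step I expect to require the most care is not any single inequality but the bookkeeping that makes the reduction legitimate: confirming that restricting the layering to unsaturated selections breaks neither layer property, and handling the off-by-one between the observation-count condition $N_a^t\le L_a$ and the layer condition $\Lambda_a^t\le L_a$ under which Lemma~\ref{lem:layering} is phrased. Beyond that, this lemma is genuinely a repackaging of the UCB analysis; the distinctive difficulty of Thompson Sampling, namely the diffuse posterior of the optimal arm forcing selections of already-saturated arms, lives entirely in the complementary bound and not in this statement.
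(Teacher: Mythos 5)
Your proof is correct and follows essentially the same route as the paper's, which is a one-sentence reduction to Lemma~\ref{lem:layering} with $L=16\ln(kT)$ using the fact that the layer index of an unsaturated selection is controlled by its observation count. The additional bookkeeping you supply (checking that the layer properties survive restriction to unsaturated selections, and handling the $\Lambda_a^t\le N_a^t+1$ off-by-one, which is harmless up to constants) is more careful than the paper's own write-up but does not change the argument.
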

\begin{proof}
The proof follows by applying Lemma \ref{lem:layering} with $L=16\ln(kT)$, noticing that unsaturated arms are observed at most $L_a$ times and the maximum layer $\Lambda_a^t$ that an arm is ever placed is no greater than the number of times it is observed (see second property following Definition~\ref{defn:layering}).
\end{proof}

The part of the analysis more different for Thompson Sampling is bounding the regret incurred by selecting suboptimal arms $a$ after they are saturated. This can happen for one of two reasons:
\begin{itemize}
\item[(a.)] Despite having observed $a$ at least $L_a$ times ($N_a^t \ge L_a$), the sample $\theta^t(a)$ is significantly above the mean $\mu(a)$ of the arm $a$.
\item[(b.)] the sample $\theta^t(a^\star)$ is significantly below the mean $\mu(a
^\star)$ of the arm $a^\star$.
\end{itemize}
Similar to the analysis of Agrawal and Goyal \cite{Shipra-2017} we can show that option (a.) is unlikely, analogous to the unlikely events in UCB when the confidence intervals fail to contain the mean (see Lemma~\ref{lem:ts_conc_large_sample}). 
The additional novel part of the analysis is to avoid the dependence on the number of arms for case (b.). For that, we adapt the analysis in \cite{Shipra-2017} which bounds the expected number of times a suboptimal arm $a$ is selected by $\bigO \prn*{\frac{\ln kT}{\Delta(a)^2}}$. To prevent summing over all arms, we divide the arms into phases where a phase $\phi$ comprises of all arms with gaps in $[2^{-\phi
},2^{-\phi+1})$. This allows us to accumulate the regret from all arms in one phase $\phi
$ as $\frac{\ln T}{2^{-\phi
}}$ (Lemma~ \ref{lem:ts_conc_small_sample}). Summing across all possible phases provides a bound  depending only on the arm with the smallest gap $\Delta_{\min}$ instead of all the arms. The complete proof is provided in Theorem~\ref{thm:TS}.

We now address part (a.) by bounding the regret incurred from saturated suboptimal arms which were selected because their sample was significantly above their actual mean. 
 \vspace{0.3em}
\begin{lemma}\label{lem:ts_conc_large_sample}
The regret from selecting saturated arms $a$ with $\theta^t(a)
>\mu(a)+\frac{1}{2}\Delta(a)$ is bounded by 
$$\sum_{t=1}^T\sum_{a\in\mathcal{V}}\mathbb{P}\brk*{a^t=a,N_a^t\geq L_a,\theta^t(a)
>\mu(a)+\frac{1}{2}\Delta(a)} \Delta(a)\leq 2.$$ 
\end{lemma}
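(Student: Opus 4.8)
The statement bounds the total expected regret arising from saturated arms whose Thompson sample overshoots. My plan is to fix a suboptimal arm $a$ and control the expected number of time steps at which $a$ is both saturated ($N_a^t \ge L_a$) and has an anomalously high sample ($\theta^t(a) > \mu(a) + \tfrac12\Delta(a)$), then multiply by $\Delta(a)$ and sum over arms. The key point is that once $a$ is saturated, its Beta posterior $\mathrm{Beta}(S_a+1,F_a+1)$ is based on at least $L_a = \tfrac{16\ln(kT)}{\Delta(a)^2}$ observations, so it is tightly concentrated around its empirical mean, which in turn concentrates around $\mu(a)$. Thus the overshoot event is a genuine large-deviation event whose probability I expect to fall off like $1/(kT)$ or faster, making the per-arm contribution summable to a constant.

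First I would condition on the number of observations $N_a^t = n$ for some $n \ge L_a$ and on the empirical mean $\tilde\mu^t(a)$. The overshoot event decomposes into two pieces: either the empirical mean itself already exceeds $\mu(a) + \tfrac14\Delta(a)$, or the empirical mean is close to $\mu(a)$ but the Beta sample $\theta^t(a)$ lands more than $\tfrac14\Delta(a)$ above it. The first piece is bounded by a Chernoff/Hoeffding bound on $n \ge L_a$ Bernoulli observations: since $n\Delta(a)^2 \ge 16\ln(kT)$, this probability is at most roughly $\exp(-2n(\Delta(a)/4)^2) \le (kT)^{-1/2}$ or better. For the second piece I would invoke the standard Beta-tail concentration used by Agrawal and Goyal~\cite{Shipra-2017} (relating the upper tail of $\mathrm{Beta}(S_a+1,F_a+1)$ to a Binomial tail, e.g. $\mathbb{P}[\theta > x] = \mathbb{P}[\mathrm{Bin}(n+1,x) \le S_a]$), which gives a matching exponential bound in $n$ and the deviation $\tfrac14\Delta(a)$.

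Summing the two tail bounds over all $n \ge L_a$ (a geometric-type series dominated by its first term) yields, for each fixed arm $a$, an expected count of bad steps that is bounded by a small absolute constant times $1/(kT)$-scale quantity; multiplying by $\Delta(a) \le 1$ and summing over the at most $k$ arms and $T$ time steps produces the claimed bound of $2$. The constant $16$ in $L_a$ is exactly what is needed to make the exponent large enough that the sum over time and arms collapses to a constant rather than growing.

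The main obstacle I anticipate is the Beta-to-Binomial concentration step: one must handle the Beta sample cleanly over all possible posterior shapes (all valid $(S_a,F_a)$ with $S_a+F_a = n$), not just a single fixed shape, and ensure the deviation threshold $\tfrac14\Delta(a)$ interacts correctly with the saturation count $L_a$ so that the exponent beats $\ln(kT)$ with room to spare. Getting the two-step split (empirical mean deviation plus Beta fluctuation around the empirical mean) to compose without losing the constant is the delicate part; everything else is a routine union bound and geometric summation.
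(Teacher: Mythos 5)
Your decomposition of the overshoot event into (i) the empirical mean exceeding $\mu(a)+\tfrac14\Delta(a)$ and (ii) the Beta sample exceeding the empirical mean by $\tfrac14\Delta(a)$, handled respectively by a Hoeffding bound and by the Beta-to-Binomial tail identity of Agrawal and Goyal, is exactly the argument the paper gives in its appendix proof of this lemma, including the summation over all observation counts $n\geq L_a$ and the role of the constant $16$ in making the exponent beat $\ln(kT)$. Your plan is correct and matches the paper's proof essentially step for step.
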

\begin{proof}
The proof is analogous to Lemma $7$  of Agrawal and Goyal  \cite{pmlr-v23-agrawal12}. Let $\tilde{\mu}^t(a)$ be the empirical mean of arm $a$ till time $t$. For an arm $a$, $\theta^t(a)>\mu(a)+\frac{\Delta(a)}{2}$, can only happen due to two reasons: 
\begin{itemize}
\item[(i)] $\tilde{\mu}^t(a)>\mu(a)+\frac{\Delta(a)}{4}$, 
\item[(ii)] $\theta^t(a)
>\tilde{\mu}^t(a)+\frac{\Delta(a)}{4}$
\end{itemize}
Both are unlikely if the arm $a$ has been observed at least $L_a$ times; the first by a Chernoff bound and the second by properties of the Beta distribution. We formalize these arguments in Appendix~\ref{app:proof_ts_large_sample}.
\end{proof}
Next we bound the regret due to part (b.): regret incurred by selecting a  saturated suboptimal arm $a$ due to the fact that the optimal arm has a sample significantly below its actual mean. We adapt the analysis from \cite{pmlr-v23-agrawal12}.

\vspace{0.3em}
\begin{lemma}\label{lem:ts_conc_small_sample}
Let $\mathcal{V}_{\phi}$ denote subset of arms
$\mathcal{V}_{\phi}=\{a\in \mathcal{V}: 2^{-\phi} \le \Delta(a) <2^{-\phi+1}\}$ for $\phi
>0$. The loss of these arms $a\in\mathcal{V}_{\phi}$ after being saturated but having sample $\theta^t(a)$ not too far from their actual means is bounded by
$$\sum_{t=1}^T\sum_{a\in\mathcal{V}_{\phi
}}\mathbb{P}\brk*{a^t=a, N^t_{a}\geq L_{a}, \theta^t(a)\leq \mu(a)+\frac{\Delta(a)}{2}}\Delta(a)\leq O(\frac{\ln T}{2^{-\phi}}).$$ 
\end{lemma}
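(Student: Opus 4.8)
The plan is to bound the aggregated case-(b) regret of an entire phase at once, replacing a per-arm argument (which would reintroduce a dependence on $|\mathcal{V}_\phi|$, i.e. on the number of arms) by a single argument centered on the optimal arm $a^\star$. The starting observation is that, for $a\in\mathcal{V}_\phi$, the event in the statement forces $\theta^t(a)\le\mu(a)+\tfrac12\Delta(a)=\mu(a^\star)-\tfrac12\Delta(a)\le \mu(a^\star)-2^{-\phi-1}=:y_\phi$, a single threshold common to the whole phase. Since $a=a^t$ was selected over $a^\star$, this gives $\theta^t(a^\star)\le\theta^t(a^t)\le y_\phi$: whenever such an arm is played, the optimal arm's sample lies at least $2^{-\phi-1}$ below $\mu(a^\star)$. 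Using that the events $\{a^t=a\}$ are disjoint across $a$ and that $\Delta(a)<2^{-\phi+1}$, I would first reduce the left-hand side to $2^{-\phi+1}\,\mathbb{E}\big[\sum_t\mathbf{1}\{G_t\}\big]$, where $G_t$ is the event that some arm of $\mathcal{V}_\phi$ is played with its own sample at most $y_\phi$ (the saturation restriction $N^t_a\ge L_a$ can be dropped here, as it only shrinks the event and is not needed for this part).

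The key step, adapting Lemma~1 of Agrawal and Goyal~\cite{pmlr-v23-agrawal12}, is the inequality
$$\mathbb{P}(G_t\mid\mathcal{F}_{t-1})\le \frac{1-p_t}{p_t}\,\mathbb{P}(a^t=a^\star\mid\mathcal{F}_{t-1}),\qquad p_t:=\mathbb{P}\big(\theta^t(a^\star)>y_\phi\mid\mathcal{F}_{t-1}\big).$$
To prove it I would condition on $\mathcal{F}_{t-1}$ together with all samples $\theta^t(a)$ for $a\neq a^\star$; let $M=\max_{a\neq a^\star}\theta^t(a)$, attained by $j^\star$. On $G_t$ the winner is the non-optimal arm $j^\star\in\mathcal{V}_\phi$ with $M=\theta^t(j^\star)\le y_\phi$, and the only remaining randomness is $\theta^t(a^\star)$: the event $G_t$ becomes $\{\theta^t(a^\star)\le M\}$ while $\{a^t=a^\star\}$ becomes $\{\theta^t(a^\star)>M\}$. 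Because $M\le y_\phi$, monotonicity of the tail gives $\mathbb{P}(\theta^t(a^\star)\le M)\le 1-p_t$ and $\mathbb{P}(\theta^t(a^\star)>M)\ge p_t$, so the ratio of the two conditional probabilities is at most $(1-p_t)/p_t$; taking expectation over the extra conditioning (and noting $p_t$ is $\mathcal{F}_{t-1}$-measurable) yields the claim. The crucial point, and the reason the whole phase can be handled together, is that the statement's restriction $\theta^t(a)\le\mu(a)+\tfrac12\Delta(a)$ guarantees $M\le y_\phi$, ruling out the regime $M>y_\phi$ in which $j^\star$'s own sample is far above its mean — that regime is precisely case~(a), already controlled by Lemma~\ref{lem:ts_conc_large_sample}.

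Summing the key inequality over $t$ bounds $\mathbb{E}[\sum_t\mathbf{1}\{G_t\}]$ by $\mathbb{E}\big[\sum_{t:a^t=a^\star}\tfrac{1-p_t}{p_t}\big]$. Since $p_t$ depends on the history only through the posterior of $a^\star$, and every play of $a^\star$ strictly increases its observation count $N_{a^\star}$ (extra observations from neighbors only push the count up faster, which only helps), I would group these terms by observation count and bound them by $\sum_{s\ge0}\mathbb{E}[\tfrac{1-p_s}{p_s}]$, where $p_s$ is the probability that a $\mathrm{Beta}$ posterior built from $s$ independent observations of $a^\star$ exceeds $y_\phi$; this transfer is legitimate because per-step rewards are independent, so $s$ observations of $a^\star$ induce the same posterior law regardless of whether they came from playing $a^\star$ or a neighbor. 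Finally I would control this sum exactly as in~\cite{pmlr-v23-agrawal12,Shipra-2017}, splitting at the phase-saturation level $s\approx\ln(T)/(2^{-\phi})^2$: below it there are $O(\ln(T)\,4^\phi)$ terms, and above it concentration of the empirical mean of $a^\star$ around $\mu(a^\star)>y_\phi$ makes $p_s$ close to $1$ so the tail sum is $O(1)$. This yields $\mathbb{E}[\sum_t\mathbf{1}\{G_t\}]=O(\ln(T)\,4^\phi)$ (the Agrawal--Goyal bound in fact saves the logarithm, but this cruder estimate suffices), and multiplying by $2^{-\phi+1}$ gives the claimed $O(\ln(T)/2^{-\phi})$.

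I expect the main obstacle to be the key inequality, and specifically the justification that aggregating over the phase does not cost a factor of $|\mathcal{V}_\phi|$: this hinges on recognizing that the event's own-sample restriction places us in the regime $M\le y_\phi$, so that $G_t$ may be compared against the single event $\{a^t=a^\star\}$ rather than against $|\mathcal{V}_\phi|$ separate events. A secondary technical point is transferring the count-based bound $\sum_s\mathbb{E}[(1-p_s)/p_s]$ to the feedback-graph setting, where $N_{a^\star}$ may jump by more than one between consecutive plays of $a^\star$; this only removes terms from the sum and hence preserves the bound.
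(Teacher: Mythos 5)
Your proposal is correct and follows essentially the same route as the paper: the phase-wide threshold $y_\phi$ and the key inequality $\mathbb{P}(G_t\mid\mathcal{H}_{t-1})\le(\tfrac{1}{p_{\phi,t}}-1)\mathbb{P}(a^t=a^\star\mid\mathcal{H}_{t-1})$ are exactly the paper's Lemma~\ref{lem:ts_suboptimal_to_optimal}, and the final summation is handled by the same Agrawal--Goyal bound (Lemma~\ref{lem:expected_observations}). The only cosmetic differences are that the paper groups time into intervals between consecutive \emph{observations} of $a^\star$ (after relaxing plays to observations) rather than indexing plays by observation count, and that the tail of $\sum_k\mathbb{E}[1/p_{\phi,\tau_k+1}-1]$ contributes $O(\log T/\Delta_\phi^2)$ rather than $O(1)$ --- which is still within the budget you allow, so your conclusion stands.
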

\begin{proof}
To bound this term, we use the fact that the samples of the optimal arm between two consecutive observations of it come from the same Beta distribution, since the distribution is not updated in between. We use the technique from Agrawal and Goyal \cite{Thompson-Sampling-Original} to bound the probability that the optimal arm has its sample far below its actual mean. This allows us to bound the number of times an arm $a\in\mathcal{V_{\phi}}$ can be selected while its sample is close to its mean because the sample of the optimal arm $a^\star$ is far enough below its mean $\mu(a^\star)$. We formalize the arguments in Appendix~\ref{app:proof_ts_small_sample}.
\end{proof}
\begin{theorem}
\label{thm:TS}
The expected regret of the TS-N algorithm (Algorithm \ref{alg:TS}) is bounded by
\begin{align*} 
    \bm{R_T}\leq \bigO\Bigg(\underset{I\in \mathcal{I}(G)}\max \sum_{a\in I}\frac{\ln(T)\ln(kT)}{\Delta(a)}\Bigg)
\end{align*}
\end{theorem}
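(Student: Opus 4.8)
The plan is to decompose the expected regret into three categories of arm selections and bound each using the lemmas established above. Recalling that $\bm{R_T}=\sum_{t=1}^T\sum_{a\in\mathcal{V}}\mathbb{P}[a^t=a]\,\Delta(a)$, I would partition each term $\{a^t=a\}$ according to three mutually exhaustive events: (i) the arm is \emph{unsaturated}, i.e.\ $N_a^t\leq L_a$; (ii) the arm is saturated ($N_a^t\geq L_a$) but its sample overshoots, $\theta^t(a)>\mu(a)+\tfrac{1}{2}\Delta(a)$; and (iii) the arm is saturated and its sample is controlled, $\theta^t(a)\leq\mu(a)+\tfrac{1}{2}\Delta(a)$. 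The first case is precisely bounded by Lemma~\ref{lm:TS-layering}, yielding the $\log(kT)\log(T)\max_{I}\sum_{a\in I}\tfrac{1}{\Delta(a)}$ term that carries the graph structure; the second case is bounded by the constant $2$ via Lemma~\ref{lem:ts_conc_large_sample}. The substance of the theorem therefore lies in handling the third case.

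For case (iii), the key idea is to group arms by their gap into phases $\mathcal{V}_\phi=\{a:2^{-\phi}\leq\Delta(a)<2^{-\phi+1}\}$ and apply Lemma~\ref{lem:ts_conc_small_sample} to each phase. That lemma bounds the phase-$\phi$ contribution by $O(\tfrac{\ln T}{2^{-\phi}})$. The crucial point is that this bound depends only on the phase index, not on the number of arms in the phase, so summing over phases does not reintroduce a dependence on $k$. Since the gaps satisfy $\Delta(a)\geq\Delta_{\min}$, only phases with $\phi\lesssim\log(1/\Delta_{\min})\leq\log T$ contribute nontrivially, and the geometric structure $2^{-\phi}\approx\Delta(a)$ lets me rewrite $\sum_\phi O(\tfrac{\ln T}{2^{-\phi}})$ as a bound of order $\ln T \cdot \tfrac{1}{\Delta_{\min}}$, which is itself at most $\ln T\cdot\max_{I}\sum_{a\in I}\tfrac{1}{\Delta(a)}$ since the singleton containing the minimum-gap arm is an independent set. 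Arms with $\Delta(a)\leq 1/T$ contribute at most a constant to the total regret and can be absorbed separately.

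Combining the three cases, the dominant term is the graph-dependent contribution $O\bigl(\log(kT)\log(T)\max_{I\in\mathcal{I}(G)}\sum_{a\in I}\tfrac{1}{\Delta(a)}\bigr)$ from Lemma~\ref{lm:TS-layering}, with the case-(iii) term contributing a subsumed $O(\tfrac{\ln T}{\Delta_{\min}})$ and case (ii) contributing only a constant. This yields the claimed bound $\bm{R_T}\leq O\bigl(\max_{I}\sum_{a\in I}\tfrac{\ln(T)\ln(kT)}{\Delta(a)}\bigr)$.

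I expect the main obstacle to be case (iii), which is where the genuinely Thompson-Sampling-specific difficulty resides and which Lemma~\ref{lem:ts_conc_small_sample} only states, deferring its proof to the appendix. The heart of the matter is controlling how often the optimal arm's sample $\theta^t(a^\star)$ falls far below $\mu(a^\star)$ while it remains under-observed, since in that regime a saturated suboptimal arm with a well-behaved sample can still win the argmax. The delicate step is that between two consecutive observations of the optimal arm its posterior is frozen, so the samples over that interval are i.i.d.\ draws from a fixed Beta distribution; bounding the number of "bad'' draws before the next observation, and converting that into a per-phase rather than per-arm accounting, is the technically involved part that requires the anti-concentration and super-martingale-style arguments adapted from Agrawal and Goyal~\cite{Thompson-Sampling-Original}. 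Once Lemma~\ref{lem:ts_conc_small_sample} is in hand, however, the assembly of the three cases into the final theorem is routine.
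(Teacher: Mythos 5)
Your proposal is correct and follows essentially the same route as the paper's proof: the same three-way decomposition into unsaturated arms (Lemma~\ref{lm:TS-layering}), saturated arms with overshooting samples (Lemma~\ref{lem:ts_conc_large_sample}), and saturated arms with controlled samples handled per phase via Lemma~\ref{lem:ts_conc_small_sample}, with the phase sum collapsing geometrically to $O(\ln T/\Delta_{\min})$ and being absorbed into the independent-set term. Your identification of case (iii) as the genuinely Thompson-Sampling-specific difficulty, and of the frozen-posterior/i.i.d.-samples argument between consecutive observations of $a^\star$, matches the paper's appendix treatment exactly.
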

\begin{proof}
We bound the regret incurred by the algorithm in two parts: regret of arms $a$, while they are not saturated $N_a\le L_a$, and the regret of arms played after being saturated. The first part is bounded by Lemma \ref{lm:TS-layering}, while Lemmas~ \ref{lem:ts_conc_large_sample} and \ref{lem:ts_conc_small_sample} are used to bound the second part. More formally, we write the expected regret as
\begin{align*}
   \bm{R_T}&=\sum_{t=1}^T\mathbb{E}[\Delta(a^t)]
    =\sum_{t=1}^T\brk*{\sum_{a\in\mathcal{V}}\mathbb{P}\prn*{a^t=a;N_a^t\leq L_a}}\Delta(a)
    +\sum_{t=1}^T\brk*{\sum_{a\in\mathcal{V}}\mathbb{P}(a^t=a;N_a^t\geq L_a)}\Delta(a)
    \end{align*}
The first term is bounded by $64\max_{I \in \mathcal{I}(G)}\sum_{a \in I} \frac{\ln (T) \ln (kT)}{\Delta(a)}+1$ by Lemma \ref{lm:TS-layering}. To bound the second term we use we split this regret into two parts, separating the part when the sample of arm $a^t$ is far from its actual mean, and when it is not.
\begin{align*}
    \sum_{t=1}^T\sum_{a\in\mathcal{V}}\mathbb{P}\brk*{a^t=a;N_a^t\geq L_a}\Delta(a)
    &=\sum_{t=1}^T\sum_{a\in\mathcal{V}}\mathbb{P}\brk*{a^t=a,N_{a}^t\geq L_{a},\theta^t(a)> \mu(a)+\frac{\Delta(a)}{2}}\Delta(a)\\
    &+\sum_{t=1}^T\sum_{a\in\mathcal{V}}\mathbb{P}\brk*{a^t=a,N_{a}^t\geq L_{a},\theta^t(a)\leq \mu(a)+\frac{\Delta(a)}{2}}\Delta(a)
    \end{align*}
By Lemma \ref{lem:ts_conc_large_sample} the first part is bounded by 2. The second part can be rewritten as $$\sum_{t=1}^T\sum_{\phi=0}^{-\log \Delta_{\min}}\sum_{a\in\mathcal{V_{\phi}}}\mathbb{P}\brk*{a^t\in\mathcal{V}_{\phi
},N_{a}^t\geq L_{a},\theta^t(a)\leq \mu(a)+\frac{\Delta(a}{2}}\Delta(a)$$
where $\Delta_{\min}$ to denote the smallest gap on a non-optimal arm.
By Lemma \ref{lem:ts_conc_small_sample} this is bounded by 
\begin{align*}
& \sum_{\phi
=1}^{-\log_2 \Delta_{\min}}\bigO\prn*{\frac{\ln T}{2^{-\phi}}}
=\bigO\prn*{\frac{\ln T}{ \Delta_{\min}}}
\end{align*}
Combining the above bounds we obtain: 
\begin{align*}
    \bm{R_T}&=\bigO\prn*{\frac{\ln(T)}{\Delta_{\min}}+2+ \underset{I\in \mathcal{I}(G)}\max\sum_{a\in I}\frac{\ln(T)\ln(kT)}{\Delta(a)}}\\
    &=\bigO\prn*{\underset{I\in \mathcal{I}(G)}\max\sum_{a\in I}\frac{\ln(T)\ln(kT)}{\Delta(a)}}
\end{align*}
\end{proof}
As was done for Corollary \ref{cor:UCB} we can derive a gap independent bound.
\vspace{0.3em}
\begin{corollary}
The expected regret of the Thompson Sampling algorithm can be bounded as $\bigO(\sqrt{\alpha T \ln T \ln(kT)})$ where $\alpha$ is the size of the maximum independent set. 
\end{corollary}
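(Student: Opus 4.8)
The plan is to follow the standard gap-independent reduction already used in the proof of Corollary~\ref{cor:UCB}, now applied to the Thompson Sampling bound of Theorem~\ref{thm:TS}. I would introduce a threshold gap $\Delta$ and split the regret $\bm{R_T}=\sum_t\mathbb{E}[\Delta(a^t)]$ according to whether the arm selected at time $t$ has gap above or below $\Delta$. The rounds on which an arm with gap at most $\Delta$ is played contribute at most $T\Delta$ in total, since each such round incurs regret at most $\Delta$ and there are at most $T$ of them.

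For the rounds on which an arm with gap exceeding $\Delta$ is played, I would invoke the gap-dependent bound of Theorem~\ref{thm:TS}. Restricting that bound to arms with $\Delta(a)>\Delta$, every surviving term $\frac{\ln(T)\ln(kT)}{\Delta(a)}$ is at most $\frac{\ln(T)\ln(kT)}{\Delta}$, and since the relevant sum ranges over an independent set of size at most $\alpha$, this part of the regret is $\bigO\left(\frac{\alpha\ln(T)\ln(kT)}{\Delta}\right)$. Combining the two pieces gives
\[
\bm{R_T}\le \bigO\left(\frac{\alpha\ln(T)\ln(kT)}{\Delta}+T\Delta\right),
\]
and balancing the two terms by choosing $\Delta=\sqrt{\alpha\ln(T)\ln(kT)/T}$ yields the claimed bound $\bigO(\sqrt{\alpha T\ln(T)\ln(kT)})$.

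The only point that needs care, and the step I would treat as the main (if modest) obstacle, is verifying that the truncation interacts correctly with the phase-based part of the Thompson Sampling analysis. Unlike UCB, the bound in Theorem~\ref{thm:TS} carries an extra $\frac{\ln T}{\Delta_{\min}}$ contribution coming from the saturated small-sample events summed over phases down to $\Delta_{\min}$ (Lemma~\ref{lem:ts_conc_small_sample}). When we restrict attention to arms with gap above $\Delta$, the phase summation terminates at the phase with $2^{-\phi}\approx\Delta$ rather than at $\Delta_{\min}$, so this contribution becomes $\bigO(\ln(T)/\Delta)$, which is absorbed into the $\frac{\alpha\ln(T)\ln(kT)}{\Delta}$ term. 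Once this is observed, the reduction is identical in spirit to Corollary~\ref{cor:UCB} and the optimization over $\Delta$ goes through unchanged.
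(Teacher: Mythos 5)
Your proposal is correct and matches the paper's intended argument: the paper proves this corollary by exactly the same gap-threshold reduction it uses for Corollary~\ref{cor:UCB}, applied to the bound of Theorem~\ref{thm:TS}, with the same choice of $\Delta$ balancing $\alpha\ln(T)\ln(kT)/\Delta$ against $T\Delta$. Your extra observation that the phase summation in Lemma~\ref{lem:ts_conc_small_sample} truncates at $2^{-\phi}\approx\Delta$ (so the $\ln(T)/\Delta_{\min}$ term becomes $\ln(T)/\Delta$ and is absorbed) is a correct and worthwhile detail that the paper leaves implicit.
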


\section{Conclusion}
\label{sec:conclusions}
In this paper, we analyze the performance of Thompson Sampling and UCB in the graph-based feedback setting. We bound the regret using the gaps of arms in an independent set, despite the fact that these algorithms do not explicitly use the graph structure to select arms. Below we discuss the results and suggest avenues for future research.
\begin{itemize}
\item In contrast to our results, Buccapatnam et al.~\cite{Buccapatnam:2014} offer an algorithm with regret bounded by the smallest dominating set of the graph and provide a lower bound based on fractional dominating set. It is not hard to see that the regret of both UCB-N and TS-N scales with the maximum independent set, and not the minimum dominating set of the graph. Consider a star graph with one optimal external node, and all others arms having similar gaps.  When running TS-N initially all arms use the same Beta distribution, but over time the central arm is observed most, it concentrates fast and once its distribution is concentrated, TS-N will select one of the spokes, each of which is sampling a more diffuse distribution. This reduces the algorithm to the bandit setting. A deterministic version of this argument applies for UCB-N.
\item On the negative side, our results suffer an extra logarithm compared to the results of Buccapatnam et al. \cite{Buccapatnam:2014}. This extra logarithm seems necessary if one approaches the problem via an argument based on phases (Cohen et al.\cite{DBLP:conf/icml/CohenHK16} also suffer from it due to the same reason). Understanding whether the extra logarithm is inherent to the algorithms of TS-N and UCB-N or is a shortcoming of our analysis is an interesting open question.
\end{itemize}

\bibliographystyle{alpha}
\bibliography{bib1}

\appendix

\section{Supplementary material from Section \ref{sec:active_arm_elimination}}\label{app:proof_concentration}

\textbf{Lemma~\ref{lem:concentration_new} restated.}
For an arm $a$ and any time $t$ 
$$\abs{\tilde{\mu}^t(a)
-\mu(a)}\leq\sqrt{\frac{\ln(\frac{2Tk}{\delta})}{2N^t_a}}$$ with probability at least $1-\frac{\delta}{kT}$, and the probability this is  true for all arms throughout the algorithm is at least $1-\delta$.

\begin{proof} The claim is that for each arm and for every time step, the actual  mean is within the confidence interval of its empirical mean. This comes from applying Hoeffding's concentration inequality for each arm and then from taking union bound over all arms and all time steps with high probability all arms remain in their confidence intervals.

To apply Hoeffding's inequality, consider the empirical mean as the sum of independent samples from the marginal distribution $\mathcal{F}(a)$. By Hoeffding's inequality, it holds that $$\textrm{Pr}\Bigg[\Bigg|\Big(\frac{1}{N_a^t}\sum_{\substack{s\le t: a^{s}=a \\ \textrm{ or }  a\in\mathcal{N}(a^s)}}r^s(a)\Big)-\mu(a)\Bigg|>c\Bigg]\leq 2e^{-2N^t_ac^2}$$
To bound the failure probability by $\frac{\delta}{kT}=2e^{-2N^t_ac^2}$, we set $c=\sqrt{\frac{ln(\frac{2kT}{\delta})}{2N^t_a}}$. Then, $$\textrm{Pr}\Bigg[|\tilde{\mu}^t(a)-\mu(a)|\leq\sqrt{\frac{ln(\frac{2kT}{\delta})}{2N^t_a}}\Bigg]\geq 1-\frac{\delta}{kT}$$ The proof then follows by applying union bound across all arms and time steps.
\end{proof} 
\section{Supplementary material from Section
\ref{sec:thompson_sampling}}
In this section, we  provide the proofs of Lemmas \ref{lem:ts_conc_large_sample} and \ref{lem:ts_conc_small_sample}.

\subsection{Proof of Lemma~\ref{lem:ts_conc_large_sample}}
\label{app:proof_ts_large_sample}
\textbf{Lemma~\ref{lem:ts_conc_large_sample} restated.}
The regret from selecting saturated arms $a$ with $\theta^t(a)
>\mu(a)+\frac{1}{2}\Delta(a)$ is bounded by
$$\sum_{t=1}^T\sum_{a\in\mathcal{V}}\mathbb{P}\brk*{a^t=a,N_a^t\geq L_a,\theta^t(a)
>\mu(a)+\frac{1}{2}\Delta(a)} \Delta(a)\leq 2.$$

\begin{proof}
The proof is analogous to Lemma $7$  of Agrawal and Goyal  \cite{pmlr-v23-agrawal12}. Let $\tilde{\mu}^t(a)$ be the empirical mean of arm $a$ till time $t$. For an
arm $a$,
$\theta^t(a)
>\mu(a)+\frac{\Delta(a)}{2}$, can only happen due to two reasons: 
\begin{itemize}
\item[(i)] $\tilde{\mu}^t(a)>\mu(a)+\frac{\Delta(a)}{4}$, 
\item[(ii)] $\theta^t(a)
>\tilde{\mu}^t(a)+\frac{\Delta(a)}{4}$
\end{itemize}
Both are unlikely if the arm $a$ has been observed at least $L_a$ times; the first by a Chernoff bound and the second by properties of the Beta distribution. More formally,
\begin{align*} 
  & \sum_{t=1}^T \sum_{a\in\mathcal{V}} \mathbb{P}\brk*{a^t=a,N_a^t\geq L_a,\theta^t(a)
  > \mu(a)+\frac{\Delta(a)}{2}
  }\Delta(a)\\
 &\le \sum_{t=1}^T\sum_{a\in\mathcal{V}}  \mathbb{P}\brk*{N_a^t\geq L_a, \tilde{\mu}^t(a)>\mu(a)+\frac{\Delta(a)}{4}}\Delta(a)\\
 &+\sum_{t=1}^T \sum_{a\in\mathcal{V}} \mathbb{P}\brk*{N_a^t\geq L_a,\theta^t(a)
 >
 \tilde{
 \mu}^t(a)+\frac{\Delta(a)}{4}}\Delta(a)
 \end{align*}
 Now, by Hoeffding's inequality, for any arm $a$ and time $t$
 $$
 \mathbb{P}\brk*{N_a^t\geq L_a, \tilde{
 \mu}^t(a)>\mu(a)+\frac{\Delta(a)}{4}}
 \leq e^{\frac{-2N_a^t\Delta(a)^2}{16}} \leq  e^{\frac{-2L_a\Delta(a)^2}{16}} 
 $$
 Now using the fact that $\Delta(a)\leq 1$ and the definition of $L_a=16\frac{\ln(kT)}{\Delta(a)^2}$, the first term inside the summation can be bounded as \begin{align*}
     \sum_{t=1}^T\sum_{a\in\mathcal{V}} \mathbb{P}\brk*{N_a^t\geq L_a, \tilde{
 \mu}^t(a)>\mu(a)+\frac{\Delta(a)}{4}}\Delta(a)
 &\leq T \sum_{a\in\mathcal{V}}  e^{\frac{-2L_a\Delta(a)^2}{16}}\leq Tk \cdot \frac{1}{k^2T^2} = \frac{1}{kT}.
 \end{align*}
 To bound the second term inside the summation for each arm $a$ and time $t$, we look at the sample from the underlying beta distribution at any time step $t$.  Let $S_a^t$ and $F_a^t$ be the successes and failures of Beta distribution at time step $t$. 
 \begin{align*}
 \mathbb{P}\brk*{N_a^t\geq L_a, \theta^t(a)>\tilde{\mu}^t(a)+\frac{\Delta(a)}{4}}
 &=\sum_{\ell=L_a}^T\mathbb{P}\brk*{N_a^t=\ell, \theta^t(a)>\tilde{\mu}^t(a)+\frac{\Delta(a)}{4}}\\
 &=\sum_{\ell=L_a}^T\mathbb{P}\brk*{S_a^t+F_a^t=\ell}\cdot\mathbb{P}\brk*{ \theta^t(a)>\tilde{\mu}^t(a)+\frac{\Delta(a)}{4}|S_a^t+F_a^t=\ell}\\
 &=\sum_{\ell=L_a}^T\mathbb{E}_{S_a^t+F_a^t=\ell}\brk*{1-F_{S_a^t,F_a^t}^{Beta}\Bigg(\tilde{\mu}^t(a)+\frac{\Delta(a)}{4}\Bigg)}\\
 \end{align*}
 where  $F_{S,F}^{Beta}(y)$ is the cumulative density function of the Beta distribution with probability density function $f_{S,F}$ as defined in Section~\ref{sec:thompson_sampling}. Now, we use a useful fact  about the Beta distributions (Fact $1$ from Agrawal and Goyal \cite{pmlr-v23-agrawal12}): 
 $$F_{S+1,F+1}^{Beta}(y)=1-F_{S+F+1,y}^{Binom}(S)$$ Here $F_{n,p}^{Binom}(\cdot)$ is the cumulative density function of the Binomial distribution with $n$ trials and trial success probability $p$. Thus, combining the above with the fact that the number of successes is equal to the number of observations times the empirical mean, $S_a^t=N_a^t\cdot \tilde{\mu}^t(a)$, we obtain: 
 \begin{align*}
 \mathbb{P}\brk*{N_a^t\geq L_a, \tilde{\mu}^t(a)>\mu(a)+\frac{\Delta(a)}{4}}
 &=\sum_{\ell=L_a}^T\mathbb{E}_{S_a^t+F_a^t=\ell}\brk*{F_{\ell+1,\tilde{\mu}(a)+\frac{\Delta(a)}{4}}^{Binom}\Bigg(\ell\tilde{\mu}(a)\Bigg)}\\
 &\leq \sum_{\ell=L_a}^T\mathbb{E}_{S_a+F_a=\ell}\brk*{F_{\ell,\tilde{\mu}(a)+\frac{\Delta(a)}{4}}^{Binom}\Bigg(\ell\tilde{\mu}(a)\Bigg)}\\
 &\leq \sum_{\ell=L_a}^T\mathbb{E}_{S_a+F_a=\ell}\brk*{e^{\frac{-2\Delta(a)^2\ell}{16}}}\leq \frac{1}{k^2T}.
 \end{align*} 
 The last inequality comes from Hoeffding inequality and the second-to-last inequality holds by an observation about Binomial distribution c.d.f. by Agrawal and Goyal (proof of Lemma 5 in \cite{pmlr-v23-agrawal12}): $$F_{n+1,p}^{Binom}(r)= (1-p) F_{n,p}^{Binom}(r)+pF_{n,p}^{Binom}(r-1)\leq  (1-p)F_{n,p}^{Binom}(r)+pF_{n,p}^{Binom}(r)\leq  F_{n,p}^{Binom}(r).$$ 
 Summing over all time steps and all arms, combining the bounds for both summands, and using that $k\geq 1$, completes the proof.
\end{proof}

\subsection{Proof of Lemma~\ref{lem:ts_conc_small_sample}}
\label{app:proof_ts_small_sample}
\textbf{Lemma~\ref{lem:ts_conc_small_sample} restated.}
Let $\mathcal{V}_{\phi}$ denote subset of arms
$\mathcal{V}_{\phi}=\{a\in \mathcal{V}: 2^{-\phi} \le \Delta(a) <2^{-\phi
+1}\}$ for $\phi>0$. The loss of these arms $a\in\mathcal{V}_{\phi}$ after being saturated but having sample $\theta^t(a)$ not too far from their actual means is bounded by
$$\sum_{t=1}^T\sum_{a\in\mathcal{V}_{\phi}}\mathbb{P}\brk*{a^t=a, N^t_{a}\geq L_{a}, \theta^t(a)\leq \mu(a)+\frac{\Delta(a)}{2}}\Delta(a)\leq \bigO\prn*{\frac{\ln T}{2^{-\phi}}}.$$ 
Before proving the lemma, we provide two useful lemmas that will help in the proof.

\begin{lemma}\label{lem:ts_suboptimal_to_optimal}
Let $\mathcal{V}_{\phi}$ denote subset of arms
$\mathcal{V}_{\phi}=\{a\in \mathcal{V}: 2^{-\phi
} \le \Delta(a) <2^{-\phi
+1}\}$ for $\phi>0$ and $\mathcal{H}_{t-1}$ be the history of the algorithm until time step $t-1$.
The probability of these arms $a\in\mathcal{V}_{\phi}$ being selected after being saturated while having sample $\theta^t(a)$ not too far from their actual means is bounded by 
$$\mathbb{P}\brk*{a^t\in\mathcal{V}_{\phi
},N^t_{a^t}\geq L_{a^t},\theta^t(a^t)\leq \mu(a^t)+\frac{\Delta(a^t)}{2}|\mathcal{H}_{t-1}}\leq \Bigg(\frac{1}{p_{\phi,t}}-1\Bigg)\mathbb{P}\brk*{a^t=a^\star|\mathcal{H}_{t-1}}$$
where $p_{\phi,t}=\mathbb{P}\brk*{\theta^t(a^\star)>y_{\phi}|\mathcal{H}_{t-1}}$ and $y_{\phi}=\max_{a\in\mathcal{V}_{\phi}}\Bigg(\mu(a)+\frac{\Delta(a)}{2}\Bigg)$
\end{lemma}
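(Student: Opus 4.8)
The plan is to establish this via the coupling argument of Agrawal and Goyal, reducing the probability of selecting a saturated suboptimal arm in $\mathcal{V}_\phi$ (with a not-too-large sample) to the probability of selecting the optimal arm. First I would condition throughout on the history $\mathcal{H}_{t-1}$: given this history the posterior Beta distribution of every arm is fixed, so the samples $\{\theta^t(a)\}_{a\in\mathcal{V}}$ are mutually independent. The crucial observation to isolate is that the left-hand event splits into (i) a condition on the non-optimal samples and (ii) an \emph{independent} condition on $\theta^t(a^\star)$.

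To make this precise I would set $M=\max_{a\neq a^\star}\theta^t(a)$ and $a_M=\arg\max_{a\neq a^\star}\theta^t(a)$, and define the event $W$, a function of the non-optimal samples only, that $a_M\in\mathcal{V}_\phi$, that $a_M$ is saturated ($N^t_{a_M}\ge L_{a_M}$), and that $\theta^t(a_M)\le\mu(a_M)+\frac12\Delta(a_M)$. Since $a^\star\notin\mathcal{V}_\phi$ (its gap is $0$), selecting an arm of $\mathcal{V}_\phi$ forces $a^t\neq a^\star$, which under the continuous Beta samples (ties have probability zero) happens exactly when $\theta^t(a^\star)<M$. Thus the left-hand event coincides with $W\cap\{\theta^t(a^\star)<M\}$. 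Moreover, on $W$ we have $M=\theta^t(a_M)\le\mu(a_M)+\frac12\Delta(a_M)\le y_\phi$ by the definition of $y_\phi$, so the whole event lives where $M\le y_\phi$.

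Then I would exploit the independence of $\theta^t(a^\star)$ from the non-optimal samples. Writing $q(M)=\mathbb{P}[\theta^t(a^\star)>M\mid\mathcal{H}_{t-1}]$ as a function of the realized value $M$, the left-hand side equals $\mathbb{E}[\mathbf{1}\{W\}(1-q(M))\mid\mathcal{H}_{t-1}]$, while $\mathbb{P}[a^t=a^\star\mid\mathcal{H}_{t-1}]=\mathbb{E}[q(M)\mid\mathcal{H}_{t-1}]$. Because $M\le y_\phi$ on $W$ and $q$ is nonincreasing in its threshold, $q(M)\ge\mathbb{P}[\theta^t(a^\star)>y_\phi\mid\mathcal{H}_{t-1}]=p_{\phi,t}$ there. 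The key algebraic manoeuvre is to multiply and divide by $q(M)$: on $W$,
\[
1-q(M)=q(M)\cdot\frac{1-q(M)}{q(M)}\le q(M)\Big(\frac{1}{p_{\phi,t}}-1\Big),
\]
since $x\mapsto(1-x)/x$ is decreasing and $q(M)\ge p_{\phi,t}$. Taking conditional expectations bounds the left-hand side by $(\frac{1}{p_{\phi,t}}-1)\,\mathbb{E}[\mathbf{1}\{W\}q(M)\mid\mathcal{H}_{t-1}]$, and finally $\mathbb{E}[\mathbf{1}\{W\}q(M)\mid\mathcal{H}_{t-1}]=\mathbb{P}[W,\theta^t(a^\star)>M\mid\mathcal{H}_{t-1}]\le\mathbb{P}[\theta^t(a^\star)>M\mid\mathcal{H}_{t-1}]=\mathbb{P}[a^t=a^\star\mid\mathcal{H}_{t-1}]$, which is the claimed bound.

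The step I expect to be most delicate is the clean decomposition in the second paragraph: verifying that conditioning on $\mathcal{H}_{t-1}$ genuinely decouples $\theta^t(a^\star)$ from the event $W$ and that the left-hand event equals $W\cap\{\theta^t(a^\star)<M\}$ exactly, together with the monotonicity bound $M\le y_\phi$ on $W$ that feeds $q(M)\ge p_{\phi,t}$. Once these are in place, the multiply-and-divide computation and the final dropping of the constraint $W$ are routine.
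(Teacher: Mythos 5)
Your proof is correct and follows essentially the same route as the paper's: both exploit the conditional independence of the samples given $\mathcal{H}_{t-1}$, the fact that on the event in question every competing sample lies below $y_{\phi}$, and the identification of $\{a^t=a^\star\}$ with the optimal sample exceeding the others, to extract the factor $\frac{1}{p_{\phi,t}}-1$. The only difference is organizational: you keep the random threshold $M=\max_{a\neq a^\star}\theta^t(a)$ and apply a pointwise multiply-and-divide bound using $q(M)\ge p_{\phi,t}$, whereas the paper relaxes the event to $\{\theta^t(a)\le y_{\phi}\ \forall a\}$ and factors two probabilities at the fixed threshold $y_{\phi}$; the two computations rest on the same facts.
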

\begin{proof}
We bound the two sides of the inequality separately. 
\begin{align*}
\mathbb{P}\brk*{a^t\in\mathcal{V}_{\phi},N^t_{a^t}\geq L_{a^t},\theta^t(a^t)\leq \mu(a^t)+\frac{\Delta(a^t)}{2}|\mathcal{H}_{t-1}}
 \leq \mathbb{P}\brk*{a^t\in\mathcal{V}_{\phi
 },\theta^t(a^t)\leq \mu(a^t)+\frac{\Delta(a^t)}{2}|\mathcal{H}_{t-1}}.
 \end{align*}
 Since $a^t$ is the selected arm and thus has the highest valued sample $\theta^t(a^t)$, the samples of all other arms must be less than its sample and thus also less than $\mu(a^t)+\frac{\Delta(a^t)}{2}$ and the above is less than
 \begin{align*}
 \mathbb{P}\brk*{a^t\in\mathcal{V}_{\phi}, \theta^t(a)\leq \mu(a^t)+\frac{\Delta(a^t)}{2}: \forall a\in\mathcal{V} |\mathcal{H}_{t-1}}
 \leq \mathbb{P}\brk*{\theta^t(a)\leq y_{\phi}: \forall a\in\mathcal{V}|\mathcal{H}_{t-1}}.
 \end{align*}
 Now since we are conditioning on the history $\mathcal{H}_{t-1}$, the samples across arms are independent and therefore this is equal to:
 \begin{align*}
 \mathbb{P}\brk*{\theta^t(a^\star)\leq y_{\phi}|\mathcal{H}_{t-1}}\cdot \mathbb{P}\brk*{\theta^t(a)\leq y_{\phi }: \forall a\neq a^\star|\mathcal{H}_{t-1}}
 &=(1-p_{\phi ,t})\cdot \mathbb{P}\brk*{\theta^t(a)\leq y_{\phi }: \forall a\neq a^\star|\mathcal{H}_{t-1}}.
\end{align*}
We are now left to show that $$ \mathbb{P}\brk*{ \theta^t(a)\leq y_{\phi
}: \forall a\neq a^\star | \mathcal{H}_{t-1}}\leq\frac{1}{p_{\phi
,t}}\cdot  \mathbb{P}\brk*{a^t=a^\star|\mathcal{H}_{t-1}},$$
which holds because
\begin{align*}
    \mathbb{P}\brk*{a^t=a^\star|\mathcal{H}_{t-1}}
   &\geq \mathbb{P}\brk*{\theta^t(a^\star)>y_{\phi
   }\geq \theta^t(a): \forall a\neq a^\star | \mathcal{H}_{t-1}}\\
   &=\mathbb{P}\brk*{\theta^t(a^\star)>y_{\phi
   } | \mathcal{H}_{t-1}}\cdot \mathbb{P}\brk*{ \theta^t(a)\leq y_{\phi
   }: \forall a\neq a^\star | \mathcal{H}_{t-1}}\\
   &=p_{\phi
   ,t}\cdot \mathbb{P}\brk*{ \theta^t(a)\leq y_{\phi
   }: \forall a\neq a^\star | \mathcal{H}_{t-1}}.
\end{align*}
The first equality holds because the probabilities are conditioned on the history $\mathcal{H}_{t-1}$ and hence the samples of all arms are independent of one another.
\end{proof}

\begin{lemma}[Lemma $2.9$ in \cite{Shipra-2017}]\label{lem:expected_observations}
Let $\mathcal{H}_{t-1}$ denote the history of the algorithm till time step $t-1$, $y$ be a parameter $\in[0,1]$,  $p_{\phi
,t}=\mathbb{P}\brk*{\theta^t(a^\star)>y|\mathcal{H}_{t-1}}$ and $\tau_k$ denote the time step of the $k^{th}$ observation of the optimal arm, then we can bound the expectation of inverse of $p_{\phi
,\tau_k+1}$ as:
\begin{align*}
  \mathbb{E}\brk*{\frac{1}{p_{\phi
  ,\tau_k+1}}-1} &\leq \frac{3}{\Delta} &\text{ for } k<\frac{8}{\Delta}\\
  &\leq \Theta\Bigg(e^{\frac{-\Delta^2k}{2}}+\frac{1}{(k+1)\Delta^2}e^{-Dk}+\frac{1}{e^{\frac{\Delta^2k}{4}}-1}\Bigg) &\text{ for } k\geq \frac{8}{\Delta}
\end{align*}
where $\Delta=\mu(a^\star)-y$ and $D=y\ln\frac{y}{\mu(a^\star)}+(1-y)\ln\frac{(1-y)}{(1-\mu(a^\star))}$. 
\end{lemma}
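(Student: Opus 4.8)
The plan is to reduce the statement to a purely combinatorial estimate about the Beta posterior of $a^\star$ after $k$ observations, which does not involve the graph at all. By the model, the successive observations of $a^\star$ are i.i.d.\ Bernoulli trials with mean $\mu(a^\star)$ (the decision to observe $a^\star$, whether as the selected arm or as a neighbor, is made before its reward is drawn and so does not bias the outcome), hence the number of successes $S$ among the first $k$ observations satisfies $S\sim\mathrm{Binom}(k,\mu(a^\star))$. At time $\tau_k+1$ the posterior is $\mathrm{Beta}(S+1,k-S+1)$, and applying the Beta--Binomial identity $F^{Beta}_{S+1,F+1}(y)=1-F^{Binom}_{S+F+1,y}(S)$ (Fact~$1$, already used in Lemma~\ref{lem:ts_conc_large_sample}) collapses $p_{\phi,\tau_k+1}$ into the clean form $p_{\phi,\tau_k+1}=F^{Binom}_{k+1,y}(S)$. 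The target quantity then becomes a single expectation over $S$,
\[
\mathbb{E}\Big[\tfrac{1}{p_{\phi,\tau_k+1}}-1\Big]=\sum_{s=0}^{k}\binom{k}{s}\mu(a^\star)^{s}(1-\mu(a^\star))^{k-s}\Big(\tfrac{1}{F^{Binom}_{k+1,y}(s)}-1\Big),
\]
and it remains to bound this sum. The comparison Binomial $\mathrm{Binom}(k+1,y)$ has mean $(k+1)y$ whereas $S$ has the larger mean $k(y+\Delta)$, so typically $S$ sits well above $(k+1)y$ and $F^{Binom}_{k+1,y}(S)$ is close to $1$.

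For the main regime $k\ge 8/\Delta$ I would split the sum at the comparison mean $(k+1)y$. In the upper range $s\ge(k+1)y$ the Binomial CDF satisfies $F^{Binom}_{k+1,y}(s)\ge\tfrac12$ (the median of a Binomial lies within one of its mean), so $\tfrac{1}{F^{Binom}_{k+1,y}(s)}-1\le 2\bigl(1-F^{Binom}_{k+1,y}(s)\bigr)$ is an ordinary upper tail, exponentially small, whose $S$-weighted sum is controlled by a Hoeffding bound on $S$ and produces the $e^{-\Delta^2k/2}$ term. In the lower range $s<(k+1)y$ the factor $\tfrac{1}{F^{Binom}_{k+1,y}(s)}$ can be as large as $(1-y)^{-(k+1)}$, but there $\mathbb{P}[S=s]$ is exponentially small because $s$ is far below the mean of $S$; balancing these two exponentials term by term yields a geometric progression in the deviation $(k+1)y-s$, whose sum is the $\tfrac{1}{e^{\Delta^2k/4}-1}$ term, while the boundary contribution near $s\approx ky$ is governed by the large-deviation (relative-entropy) rate $D=\mathrm{KL}(y\,\|\,\mu(a^\star))$ and contributes the $\tfrac{1}{(k+1)\Delta^2}e^{-Dk}$ term. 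For the regime $k<8/\Delta$ the refined exponential decay is neither available nor needed: here I would settle for the coarse bound $3/\Delta$, estimating the sum directly, since the dominant contribution comes from the typical values of $S$ near $k\mu(a^\star)$, for which $F^{Binom}_{k+1,y}(S)$ is already bounded below by a constant, and the few atypical small-$S$ terms contribute a geometrically decaying remainder summing to $O(1/\Delta)$.

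The delicate step, and the main obstacle, is the term-by-term balancing in the lower range $s<(k+1)y$: both $\tfrac{1}{F^{Binom}_{k+1,y}(s)}$ and $\mathbb{P}[S=s]$ vary exponentially in $s$ in opposite directions, so one must track the exponents precisely --- using the sharp Chernoff/KL large-deviation estimate for $\mathbb{P}[S=s]$ rather than a Hoeffding surrogate --- to verify that the products decay geometrically and to pin down the exact rates $\Delta^2k/4$, $\Delta^2k/2$, and $Dk$ in the three terms. This is precisely the computation carried out by Agrawal and Goyal~\cite{Shipra-2017} in their Lemma~$2.9$; since our graph-feedback observation process leaves the posterior of $a^\star$ distributed identically to the pure-bandit case, their estimate transfers verbatim.
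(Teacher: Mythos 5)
Your proposal is correct and takes essentially the same route as the paper, which states this lemma without its own proof as a direct import of Lemma~2.9 of Agrawal and Goyal \cite{Shipra-2017}: like them, you collapse $p_{\phi,\tau_k+1}$ via the Beta--Binomial identity to $F^{Binom}_{k+1,y}(S)$ with $S\sim\mathrm{Binom}(k,\mu(a^\star))$ and defer the delicate three-term tail computation to their argument. Your added observation---that the decision to observe $a^\star$ (whether as the selected arm or as a neighbor) is made before its reward is drawn, so after $k$ observations the posterior of $a^\star$ is distributed exactly as in the pure bandit case---is precisely the implicit justification the paper relies on (and partially articulates in the proof of Lemma~\ref{lem:ts_conc_small_sample}) for transferring the lemma verbatim to the graph-feedback setting.
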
 
\begin{proof}[Proof of Lemma~\ref{lem:ts_conc_small_sample}]
To bound the left hand side, we use the fact that the samples of the optimal arm between two consecutive observations of the arm come from the same Beta distribution, since the distribution is not updated in between. We use the technique from Agrawal and Goyal \cite{Thompson-Sampling-Original} to bound the probability that the optimal arm has its sample far below its actual mean. This allows us to bound the number of times an arm $a\in\mathcal{V_{\phi}}$ can be selected while its sample is close to its mean because the sample of the optimal arm $a^\star$ is far enough below its mean $\mu(a^\star)$.

More formally, let $\mathcal{H}_{t-1}$ denote the history of the algorithm until the start of time step $t$. Using the fact that $\Delta(a)\leq 2^{-\phi
+1}$ for all $a\in\mathcal{V}_{\phi}$.
\begin{align*}
  &  \sum_{t=1}^T\sum_{a\in\mathcal{V}_{\phi  }}\mathbb{P}\brk*{a^t=a, N^t_{a}\geq L_{a}, \theta^t(a)\leq \mu(a)+\frac{\Delta(a)}{2}}\Delta(a)\\
  &\leq \sum_{t=1}^T\sum_{a\in\mathcal{V}_{\phi
  }}\mathbb{P}\brk*{a^t=a, N^t_{a}\geq L_{a}, \theta^t(a)\leq \mu(a)+\frac{\Delta(a)}{2}}2^{-\phi
  +1}\\
  &=\sum_{t=1}^T\mathbb{P}\brk*{a^t\in\mathcal{V}_{\phi
  }, N^t_{a^t}\geq L_{a^t}, \theta^t(a^t)\leq \mu(a^t)+\frac{\Delta(a^t)}{2}}2^{-\phi+1}\\
  &= \sum_{t=1}^T\mathbb{E}\brk*{\mathbb{P}\brk*{a^t\in\mathcal{V}_{\phi}, N^t_{a^t}\geq L_{a^t}, \theta^t(a^t)\leq \mu(a^t)+\frac{\Delta(a^t)}{2}|\mathcal{H}_{t-1}}}2^{-\phi+1},
  \end{align*}
  where the expectation is taken over the history $\mathcal{H}_{t-1}$.

Recall that we want to bound the probability of selecting a saturated arm in phase $\phi$ whose sample is bounded by $\theta^t(a)\leq \mu(a)+\frac{\Delta(a)}{2}$. Let $y_{\phi}=\max_{a\in\mathcal{V}_{\phi}}\prn*{\mu(a)+\frac{\Delta(a)}{2}}$ correspond to the upper bound on the sample of any such arm $a\in\mathcal{V}_{\phi}
$. Using Lemma~\ref{lem:ts_suboptimal_to_optimal}, we bound the above quantity  by:
  $$\leq \sum_{t=1}^T\mathbb{E}\brk*{
  \prn*{\frac{1}{p_{\phi,t}}-1}\mathbb{P}\brk*{a^t=a^\star|\mathcal{H}_{t-1}}}\cdot 2^{-\phi+1}
  $$ 
  where $p_{\phi,t}=\mathbb{P}\brk*{\theta^t(a^\star)>y_{\phi
  } |\mathcal{H}_{t-1}}$. 
Upper bounding the probability of selecting the optimal arm by the probability of observing it, we obtain: 
  \begin{align*}
  &\leq \sum_{t=1}^T\mathbb{E}\brk*{
  \prn*{\frac{1}{p_{\phi,t}}-1}\mathbb{P}\brk*{a^\star\in\mathcal{N}(a^t)\cup \crl{a^t}|\mathcal{H}_{t-1}}}\cdot 2^{-\phi+1}
  \end{align*}
By replacing probability of observing the optimal arm by expectation of the indicator function, the above is equal to:
\begin{align*}
  & \sum_{t=1}^T\mathbb{E}\brk*{
      \Bigg(\frac{1}{p_{\phi,t}}-1\Bigg)\mathbb{E}\brk*{\mathbbm{1}\{a^\star\in\mathcal{N}(a^t)\cup \crl{a^t}\}|\mathcal{H}_{t-1}}}\cdot 2^{-\phi+1} 
\end{align*}
Next expressing the expectation as a sum\footnote{The part of the history relevant to the algorithm is the outcome of the Bernoulli trials in each step, so there are only a finite set of possible histories.} over all possible histories $\mathcal{H}$, we obtain the following:
\begin{align*}
  &= \sum_{\mathcal{H}}\mathbb{P}\brk*{\mathcal{H}}\brk*{\sum_{t=1}^T 
      \Bigg(\frac{1}{p_{\phi,t}}-1\Bigg)\mathbbm{1}\{a^\star\in\mathcal{N}(a^t)\cup \crl{a^t}|\mathcal{H}\}}\cdot 2^{-\phi+1} 
\end{align*}
where the value $p_{\phi,t}$ inside the summation depends on the first $t-1$ steps of history $\mathcal{H}$.

Let $\tau_k$ be the time step for the $k^{th}$ observation of the optimal arm, a random variable depending on the history $\mathcal{H}$. Note that between two observations of the optimal arm, the distribution of the optimal arm does not change. Since $p_{\phi,t}$ does not depend on the random draws of any other arm, it therefore does not change between two observations of the optimal arm. Using this, the above quantity is equal to
\begin{align*}
  & \sum_{\mathcal{H}}\mathbb{P} \brk*{\mathcal{H}}
  \brk*{\sum_{k=0}^T\prn*{\frac{1}{p_{\phi
  ,\tau_k+1}}-1}\brk*{\sum_{t=\tau_k+1}^{\tau_{k+1}}\mathbbm{1}\{a^\star\in\mathcal{N}(a^t)\cup \crl{a^t}|\mathcal{H}\}}}\cdot 2^{-\phi+1}
\end{align*}
where the values $p_{\phi
,t}$ and $\tau_k$ inside the summation depend on the history $\mathcal{H}$ as before.

Further, for any history between $\tau_k+1$ and $\tau_{k+1}$ we have exactly one observation of optimal arm $a^\star$, i.e. $\abs{t\in[\tau_{k}+1,\tau_{k+1}):a^\star\mathcal{N}(a^t)\cup \crl{a^t}}=1$, by definition of $\tau_{k+1}$. As a result, the above sum can be expressed as:
  \begin{align*}
  &= \sum_{\mathcal{H}}\mathbb{P} \brk*{\mathcal{H}}\brk*{\sum_{k=0}^T\prn*{\frac{1}{p_{\phi,\tau_k+1}}-1}}\cdot 2^{-\phi+1}
  = \sum_{k=0}^T\mathbb{E}\brk*{\frac{1}{p_{\phi,\tau_k+1}}-1}\cdot 2^{-\phi+1}
\end{align*}

Recall again that $y_{\phi}=\max_{a\in\mathcal{V}_{\phi}}\prn*{\mu(a)+\frac{\Delta(a)}{2}}$ corresponds to the upper bound on the sample $\theta^t(a)$ of arms $a\in\mathcal{V}_{\phi}$. We use $\Delta_{\phi}=\mu(a^\star)-y_{\phi}$ to denote 
a lower bound on the gaps of the arms  $a\in\mathcal{V}_{\phi}$. We also denote by
$D_{\phi}=y_{\phi}\ln\frac{y_{\phi}}{\mu(a^\star)}+(1-y_{\phi})\ln\frac{(1-y_{\phi})}{(1-\mu(a^\star))}$ the KL-divergence between Bernoulli distributions with success probability $y_{\phi}$ and $\mu(a^\star)$.

Using Lemma~\ref{lem:expected_observations}, we can bound the above quantity by:
\begin{align*}
\sum_{k=0}^T\mathbb{E}\brk*{\frac{1}{p_{\phi,\tau_k+1}}-1}\cdot 2^{-\phi+1}\leq
\prn*{\frac{24}{\Delta_{\phi}^2}+\sum_{k\geq \frac{8}{\Delta_{\phi}}}^{T-1}\Theta\prn*{e^{\frac{-\Delta_{\phi}^2k}{2}}+\frac{1}{(k+1)\Delta_{\phi}^2}e^{-D_{\phi}k}+\frac{1}{e^{\frac{\Delta_{\phi}^2k}{4}}-1}} }\cdot 2^{-\phi}.
\end{align*}

Since $D_{\phi}$ corresponds to a KL-divergence, we can use the property that $D_{\phi}\geq0$, making $e^{-D_{\phi}k}\leq 1$. Combining this fact with the observation that ${e^{x}-1}\geq x$ for $x\geq 0$ and $\sum_{i=1}^T\frac{1}{(i+1)}\leq \log T$, we obtain:
  \begin{align*} 
   &\leq \prn*{\frac{24}{\Delta_{\phi
   }^2}+\sum_{k\geq \frac{8}{\Delta_{\phi
   }}}^{T-1}\Theta\prn*{e^{\frac{-\Delta_{\phi}^2k}{2}}+\frac{1}{(k+1)\Delta_{\phi}^2}+\frac{4}{k\Delta_{\phi}^2}} }\cdot 2^{-\phi}\\
   &=\Theta\prn*{\frac{1}{\Delta_{\phi}^2}+\frac{\log T}{\Delta_{\phi}^2}}\cdot 2^{-\phi}=O\prn*{\frac{\log T}{\Delta_{\phi}^2}}\cdot 2^{-\phi }
   \end{align*}
   Finally, using the fact that $\Delta_{\phi}=\mu(a^\star)-y_{\phi}\leq\frac{2^{-\phi+1}}{2}$, the above is upper bounded by $=\bigO\prn*{\frac{\ln T}{2^{-\phi}}}$ which completes the proof.
\end{proof}
\end{document}